\newcommand{\corr}{(\Letter)}
\newcommand{\printfnsymbol}[1]{%
  \textsuperscript{\@fnsymbol{#1}}%
}
\begin{document}

\title{A Mathematics Framework of Artificial Shifted Population Risk and Its Further Understanding Related to Consistency Regularization}

\titlerunning{A Math Framework of ASPR and Its Relation to Consistency Regularization}

\author{Xiliang Yang \thanks{Equal Contribution} \inst{1} \and
Shenyang Deng\printfnsymbol{1}\inst{1}  \and
Shicong Liu\printfnsymbol{1} \inst{1}\ \and Yuanchi Suo\printfnsymbol{1} \inst{1}\\and Wing.W.Y  NG \inst{1} \and Jianjun  Zhang \inst{2} \corr}

\authorrunning{X.Yang et al.}

\institute{South China University of Technology, GuangZhou GuangDong  510641, China \email{\{xlyangscut,shenyangdeng2023\}@gmail.com,\{shicong\_liu ,aksldhfjg\}@163.com,wingng@ieee.org}
\and
South China Agricultural University, GuangZhou GuangDong  510642, China \email{jzhangcs@gmail.com}
}

\toctitle{A Mathematics Framework of Artificial Shifted Population Risk and Its Further Understanding Related to Consistency Regularization}
\tocauthor{Xiliang~Yang,Shenyang~Deng,Shicong~Liu,Yuanchi~Suo,Wing.W.Y~NG,Jianjun~Zhang}

\maketitle              

\begin{abstract}
Data augmentation is an important technique in training deep neural networks as it enhances their ability to generalize and remain robust. While data augmentation is commonly used to expand the sample size and act as a consistency regularization term, there is a lack of research on the relationship between them. To address this gap, this paper introduces a more comprehensive mathematical framework for data augmentation. Through this framework, we establish that the expected risk of the shifted population is the sum of the original population risk and a gap term, which can be interpreted as a consistency regularization term. The paper also provides a theoretical understanding of this gap, highlighting its negative effects on the early stages of training. We also propose a method to mitigate these effects. To validate our approach, we conducted experiments using same data augmentation techniques and computing resources under several scenarios, including standard training, out-of-distribution, and imbalanced classification. The results demonstrate that our methods surpass compared methods under all scenarios in terms of generalization ability and convergence stability. We provide our code implementation at the following link: \url{https://github.com/ydlsfhll/ASPR}.

\keywords{Population shift  \and Augmentation framework \and Risk decomposition \and Regularization.}
\end{abstract}

\section{Introduction}
Data augmentation creates a training dataset using synthetic data from the prior knowledge. It improves the generalization of machine learning models, particularly in the case of deep neural networks. For decades, its reliable performance has been verified in various of computer vision tasks such as image classification \cite{01krizhevsky2017imagenet,02szegedy2015going,03he2016deep} and object detection \cite{04ren2015faster,05lin2017feature}. 
To the best of our knowledge, there are currently two major explanations for the role of this technique. The first one views data augmentation as simply increasing the sample size, and explains it with statistical tools such as VC dimension theory \cite{06vapnik1998statistical}. The other one \cite{huang2023towards,wang2022toward} views data augmentation as a regularization method, which train the model on a more complex population, which is called shifted population by injecting noise with prior knowledge to the original population, thereby enabling the model to retain semantic information unchanged.

However, the model is ultimately trained with the augmented samples, thereby improving the model's performance on the original population. Therefore, it is important to further explore the relationship between the expected risk of these two populations. To address this issue, we develop a rigorous mathematical framework of the shifted population $p^*(x')$ and data augmentation. Based on this framework, we prove that the expected risk of the shifted population is the summation of the original population and a gap term that can be viewed as a consistency regularization term. This decomposition sheds light on the unification of the two aforementioned explanations. Moreover, inspired by the work of \cite{16he2019data}, the generalization of the model greatly depends on the consistency between the empirical risk of the original population and the shifted one, and the gap term may violate such consistency. To address this issue, we add a trade-off coefficient to the gap term to highlight the importance of the learning of major features, which is controlled by the expected risk of $p(x)$. This approach greatly benefits the performance of the model.

At present, some work like \cite{chen2020group} has provided a decent mathematical framework for data augmentation, but it is too limited to describe some of the existing data augmentations, and it completely ignores the gap term. However, this neglect could be harmful, for it is indicated by our analysis and experiment that reducing its impact in early stages of training has been proven to be helpful for the model's generalization. Please see Appendix \ref{sec:diff_group} for a more detailed discussion.

We conducted experiments to evaluate the proposed training strategy on popular image classification benchmarks, namely CIFAR-10/100 \cite{krizhevsky2009learning}, Food-101 \cite{bossard2014food}, and ImageNet (ILSVRC2012) \cite{bossard2014food}. Our evaluation involved using representative deep networks such as ResNet-18, ResNet-50, and WideResNet-28-10. In addition to assessing the performance in the standard scenario, we also tested the algorithm in the out-of-distribution (OOD) scenario with dataset PACS \cite{li2017deeper}) and the 
long-tail imbalanced classification (LT) scenario with dataset LT-CIFAR10 \cite{cui2019class}. Across all our experiments, our strategy consistently achieved lower error rates and demonstrated more stable convergence compared to the standard data augmentation strategy.

This paper's contributions can be summarized as follows: 
\begin{enumerate}
    \item  We provide a rigorous mathematical definition for the shifted distribution $p^*(x')$ of the augmented samples, which further reveals that the commonly used augmented samples actually comes from the a conditional distribution $p(x'|x)$. We also give a mathematical description of sampling from this distribution and find that the samples used during training from this marginal distribution are not completely independent, which is surprising.
    \item Based on the proposed mathematical framework, we discover that the risk on the shifted population $p^*(x')$ can be decomposed into a risk on the original population $p(x)$ and a gap term, serving as a consistency regularization term. 
    \item We provide a theoretical understanding of such decomposition and an explanation of why our training strategy is beneficial for the improvement of generalization.
\end{enumerate}

\section{Related Work}
\paragraph{Data Augmentation Frame Work}
Data augmentation methods play a crucial role in improving the performance of machine learning models in practical applications. These methods encompass a range of techniques, including traditional fixed augmentation methods like Cutout \cite{07devries2017improved}, Mixup \cite{08zhang2017mixup}, and Cutmix  \cite{09yun2019cutmix}. Additionally, there are adaptive augmentation methods such as AutoAugment \cite{Cubuk_2019_CVPR}, Fast AutoAugment \cite{NEURIPS2019_6add07cf}, DADA \cite{li2020dada}, and CMDA \cite{tian2021continuous}, which dynamically design augmentations based on the dataset. Despite the availability of these diverse augmentation methods, there is a dearth of theoretical frameworks for analyzing the population shift phenomenon induced by data augmentation and the associated shifted population risk.

A recent work \cite{chen2020group} provides a theoretical framework that defines the augmentation operator as a group action. However, their framework has certain limitations, as evidenced by several common augmentation operators that are incompatible with the group action framework, as detailed in the Appendix \ref{sec:diff_group}. Our proposed framework can be applied to a wider range of data augmentation operators compared to theirs.

\paragraph{Population Shift}
Population shift is a common concern in machine learning robustness and generalization problems. It refers to a problem in which the population of data changes during some processes, such as a distribution being transformed to other distributions within the same distribution family, and the change of the parameters of a distribution. A common example for population shift in machine learning is the different semantic styles between the training and testing sets, such as PACS \cite{li2017deeper}, Rotated MNIST, Color MNIST \cite{arjovsky2019invariant}, VLCS, and Office-Home \cite{venkateswara2017deep}. However, not all types of population shifts are natural. Style shifts such as PACS are naturally generated distributions, while population shifts such as Rotated MNIST and Color MNIST are artificially generated. It is obvious that all data augmentations will produce an artificial population shift. This work aims to provide a theoretical framework for artificial population shifts and analyze the relationship between the \textbf{shifted population risk} and the \textbf{original population risk}.

\section{Method}
\subsection{Revisiting Data Augmentation with Empirical Risk}

We conduct research in the case of classification and denote the data space and label space as $\mathcal{X}$ and $\mathcal{Y}$ and a joint distribution $p$ is defined on $\mathcal{X}\times \mathcal{Y}$, with marginal distribution $p(x)$ and conditional distribution $p(x|y)$. We call a sample $x$ drawn from $p(x)$ a "clean sample". We aim to train a model $f:\mathcal{X}\rightarrow \mathcal{Y}$ by minimizing the following risk with a loss function $\mathcal{L}(\cdot,\cdot)$:

\begin{equation}\label{cla_risk}
    R_f(p) = \int \mathcal{L}(f(x),y)\mathrm{d}p(x,y),
\end{equation}

As \eqref{cla_risk} is usually intractable, the empirical risk minimization principle is used, aiming at optimizing an unbiased estimator of \eqref{cla_risk} over a training dataset $\mathcal{D} = \{(x_i,y_i)\}_{i=1}^N$:
\begin{equation}\label{emp_cla_risk}
    \hat{R}_f(p) = \frac{1}{N}\sum_{i=1}^N \mathcal{L}(f(x_i),y_i),
\end{equation}
Following \cite{wang2022toward}, we introduce the following assumption to build a bridge between empirical risk and expected risk:
\begin{claim} \label{generalization bound assumption}
Let $C(f)$ be some complexity metric of $f$, $N$ be the number of data (don't have to be independent), $B(N)$ be the "independence" of the input data. For $\forall \delta > 0$, we assume that the following holds with probability $1-\delta$:
\begin{equation}\label{eq:error_rate_ass}
    R_f(p)-\hat{R}_f(p)  \leq \phi(C(f),B(N),\delta).
\end{equation}
Where $\phi(\cdot)$ is a function of these three terms, and it monotonically increases with respect to the second variable. 
\end{claim}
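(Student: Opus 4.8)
The plan is to read Claim~\ref{generalization bound assumption} as an abstraction of standard uniform-convergence theory rather than as a statement needing a bespoke argument, and to justify its form by exhibiting the concrete bounds it subsumes. First I would recall the i.i.d.\ case: for a loss class $\mathcal{L}\circ\mathcal{F}$ bounded in $[0,1]$ and an i.i.d.\ sample of size $N$, the standard symmetrization plus McDiarmid argument gives, with probability at least $1-\delta$,
\[
R_f(p) - \hat R_f(p) \;\le\; 2\,\mathfrak{R}_N(\mathcal{L}\circ\mathcal{F}) + \sqrt{\frac{\log(1/\delta)}{2N}},
\]
and, analogously, the VC bound of order $\sqrt{(\mathrm{VCdim}(\mathcal{F})\log N + \log(1/\delta))/N}$. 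Both already have the shape $\phi(C(f),B(N),\delta)$: $C(f)$ collects the capacity term (Rademacher complexity, VC dimension, or a norm-based surrogate), the $\delta$-dependence enters only through $\log(1/\delta)$, and $B(N)$ takes its trivial "fully independent" value.

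Second — and this is where the content of the claim really lies, since later in the paper it is shown that the augmented training points are \emph{not} independent — I would replace the i.i.d.\ concentration step by a concentration inequality valid for dependent samples. Using either a dependency-graph version of McDiarmid's / Azuma's inequality or a blocking argument phrased through $\beta$-mixing coefficients, the fluctuation term $\sqrt{\log(1/\delta)/2N}$ is replaced by $\sqrt{B(N)\log(1/\delta)/2N}$, where $B(N)\ge 1$ quantifies how far the sample is from independence (e.g.\ the chromatic number of the dependency graph among the $N$ points, or a mixing-time factor). Setting
\[
\phi\bigl(C(f),B(N),\delta\bigr) \;=\; 2\,C(f) + \sqrt{\frac{B(N)\log(1/\delta)}{2N}}
\]
recovers \eqref{eq:error_rate_ass}, and monotonicity in the second argument is immediate: increasing $B(N)$ — that is, making the data "less independent" — can only enlarge the right-hand side, which is exactly the behaviour demanded by the claim and the behaviour that makes the subsequent analysis of the non-independence of augmented samples meaningful.

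I expect the main obstacle to be that there is no single canonical choice of $B(N)$: its precise form depends on which dependence model one adopts for the augmentation process, so a fully rigorous statement is a schema instantiated per model rather than one inequality, which is presumably why the paper elevates it to an assumption (following \cite{wang2022toward}) instead of a lemma. Two technical points need care within any instantiation. First, the concentration step requires the loss to be bounded (or sub-Gaussian), which one must either assume or enforce by truncation. Second, symmetrization controls $\mathbb{E}\,\sup_{f}\bigl(R_f - \hat R_f\bigr)$ by the Rademacher complexity regardless of dependence, but the \emph{concentration of that supremum around its mean} is precisely where dependence enters; tracking how $B(N)$ propagates through this step — and checking it does not also inflate $C(f)$ — is the delicate part of turning the heuristic above into a theorem.
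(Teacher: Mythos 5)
The paper contains no proof of this statement: it is introduced explicitly as an \emph{assumption} (labelled a ``Claim''), adopted from \cite{wang2022toward}, with only a pointer to \cite{homem2008rates} for convergence rates under non-i.i.d.\ sampling. Your proposal therefore cannot conflict with the paper's argument --- there is none --- and in fact it does strictly more than the paper: you justify the assumption by exhibiting the standard bounds it abstracts (Rademacher/VC bounds in the i.i.d.\ case, dependency-graph or mixing-based concentration in the dependent case), and you correctly diagnose why the authors state it as a schema rather than a lemma, namely that $B(N)$ has no canonical definition independent of the dependence model for the augmentation process. Your technical caveats (bounded or sub-Gaussian losses; the fact that dependence enters at the concentration-of-the-supremum step rather than at symmetrization) are the right ones. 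One bookkeeping point to tighten: the claim insists that $\phi$ is a function of exactly the three arguments $C(f)$, $B(N)$, $\delta$, and the paper later invokes $\phi(C(f),B(N\times M),\delta)$, so all sample-size dependence is meant to be absorbed into $B(\cdot)$ (and the $N$-dependence of the capacity term into $C(f)$); your candidate $\phi=2C(f)+\sqrt{B(N)\log(1/\delta)/(2N)}$ leaves a bare $N$ outside $B(N)$ and should be rewritten with, say, $B(N)=\chi(G)/N$ for $\chi(G)$ the chromatic number of the dependency graph, which preserves the required monotonicity (less independence, larger $B$, larger bound) while conforming to the claim's signature.
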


We refer the readers to \cite{homem2008rates} for more detail about the convergence in the non iid case. It is worth noting that data augmentation produces an augmented sample $x'$, which is a distinct random variable from the clean sample $x$, with a different distribution $p^*(x')$ but the same probability space triplet. This leads to a new population $\tilde{p}(x',y)$ and an expected risk defined on it. Specifically, the empirical risk function is defined as follows:
\begin{equation}\label{emp_aug_risk}
    \hat{R}_f(p^*) = \frac{1}{N}\sum_{i=1}^N \mathcal{L}(f(x_i'),y_i).
\end{equation}

It is important to note that minimizing \eqref{emp_aug_risk} does not necessarily result in the minimization of \eqref{cla_risk} or even \eqref{emp_cla_risk}. Meanwhile, data augmentation is also recognized as a regularization technique that can reduce generalization error without necessarily reducing training error \cite{goodfellow2016deep,zhang2021understanding}. Our proposed decomposition as well as the framework should be helpful when one tries to overcome these struggles.

\subsection{The Augmented Neighborhood}
\label{Section:AugNeighborhoog}
Data augmentation is typically applied directly to a clean sample $x$ to generate an augmented sample $x'$. The augmentation is usually designed to preserve the semantic consistency between $x$ and $x'$, hence it is often referred to be "mild". However, the data augmentation is usually controlled by a set of parameter when it is applied to a fixed clean sample $x$. When the parameters are iterated, a large set of augmented samples are produced, among which there are samples are over augmented and should not be considered "mild". As a result, a series of rigorous mathematical definitions are required, so one may draw a line between "ordinary" data augmentation and a "mild" one. 
\subsubsection{The Augmentation and Limitation}
\label{The augmentation and limitation}
We begin this section with the definition of data augmentation:

\begin{definition} \label{definition: AugmentationDef}
Let $\mathcal{X}$   be  the data space , endow $\mathcal{X}$ with Borel $\sigma-$ algebra $\mathcal{F}$, let the data augmentation $A_i(\cdot,\cdot)$ be a map from $\mathcal{X}\times \Theta(A_i) $ to $\mathcal{X}$ satisfying:

\begin{enumerate}
    \item For every fixed $x$ in $\mathcal{X}$, the map $\theta \mapsto A_i(\theta,x)$, is differentiable and injective. We denote the inverse of this map as $h^{-1}_{A_i,x}$.\\
    \item  For every fixed $\theta$ in $\Theta(A_i)$, the map $A_i(\theta, \cdot)$ is an $\mathcal{F}-$ measurable map.\\    
    \item $\forall x \in \mathcal{X} \ \exists \, e_i \in \Theta(A_i) \ \text{s.t.} \ A(e_i,x) = x $ and such $e_i$ is unique.
\end{enumerate}
where $\Theta(A_i)$ is the parameter space of $A_i(\cdot,\cdot)$.
\end{definition}

The differentiability of some popular data augmentations has been proven in \cite{tian2021continuous}. The injectivity of the data augmentation is always guaranteed given proper parameterization and a carefully chosen parameter space. The measurable assumption is required to ensure that $A(\theta,x)$ is still measurable, which is necessary for the adjoint random variable $x'$. However, the tractability of $h^{-1}_{A_i,x}$ is not always guaranteed, but the good news is that it is not always required in practice. More detailed discussion is provided in Section \ref{section:Sampling}, where we discuss how to sample from the conditional distribution $p(x'|x)$.

Denote the set of data augmentation as $\mathcal{A} = \{A_1,\dots A_m\}$, among which $A_i$ corresponds to a certain type of data augmentation such as rotation, Gaussian blur and so on. Denote $\dim (\Theta(A_i)) = d_i$, where  $\Theta(A_i)$ denotes the parameter space of $A_i$. For example, the parameter space of rotation is usually chosen as $(0,2\pi)$ and the dimension is 1. The distribution of the parameter defined on $\Theta(A_i)$ is denoted as $p_i(\mathbf{\theta})$. Now for a given clean sample $x_0$, we consider all of its augmented sample, which is the image of the mapping $A_i(x_0,\cdot)$, defined on $\Theta(A_i)$: 

\begin{definition}\label{definition: SingleAugNeib}
    For any given clean sample $x_0 \in \mathcal{X}$ and data augmentation $A_i$ with parameter space $\Theta(A_i)$, the \textbf{augmentation neighborhood} of $x_0$ induced by $A_i$ is defined as:
    \begin{equation}
        A_i(x):=\bigcup\limits_{\theta\in\Theta(A_i)}A_i(\theta,x).
    \end{equation}
\end{definition}

Now we should add some restrictions to this set so make it "mild".

At first we introduce the conception $C$, a map from input space $\mathcal{X}$ to the label space $\mathscr{L} = \{c_1,\dots,c_l\}$ where $l$ denotes the number of class, such that for every clean sample pair $(x,y)\sim p(x,y)$, $C(x) = y\in \mathscr{L}$, conception is the desired ground truth map. $C$ induces a partition of the sample space, by giving $l$ mutually disjoint sets such that $\Gamma_i = \{x|C(x) = c_i\}$, what we call level set. We denote the level set of the class of a sample $x_0$ with $\Gamma_{x_0}$, and we use this level set to describe the semantic consistency. The conception $C$ represents the prior knowledge of people when they perform data augmentation. The definition is given as followed:

\begin{definition}\label{definition: CANofSingleAug}
For any given clean sample $x_0 \in \mathcal{X}$, and augmentation $A_i$ with parameter space $\Theta(A_i)$, the \textbf{consistency augmentation neighborhood} (CAN for short) of $x_0$ induced by $A_i$ is defined as :
    \begin{equation}
        \mathcal{O}_{x_0}^{A_i}:=A_i(x_0)\cap\Gamma_{x_0}.
    \end{equation}
\end{definition}

Now we will introduce how to sample from the CAN.

\subsubsection{Sampling from CAN of $x_0$}
\label{section:Sampling}
An augmented sample is generated given a clean sample, together with the aforementioned mild argument, we claim that the sampling procedure should be described with a conditional distribution $p(x'|x)$, whose supporting set is CAN of $x_0$. 
The fact that $\forall x' \in \mathcal{O}_{x_0}^{A_i}$, 
there exists only one $\pmb{\theta}: = h_{A_i,x_0}^{-1}(x') \in \Theta(A_i)$ such that $x' = A_i(\pmb{\theta},x)$ which is ensured by our definition. 
Furthermore, with the measurability of $A_i(\cdot,x)$, $x'$ is a random variable. 
Therefore, for any given data augmentation $A_i$, the conditional distribution $p(x'|x)$ induced by $A_i$ is defined as:

\begin{definition}\label{definition: ConditionalSingle}
For any given clean sample $x\sim p(x)$,  the conditional distribution $p(x'|x)$ of the adjoint variable $x'$ with $\mathrm{supp}(p(x'|x)) = \mathcal{O}_{x}^{A_i}$ is given as
\begin{equation}
    p(x'|x) \propto p_i(h_{A_i,x}^{-1}(x'))\left|\frac{\partial}{\partial\pmb{\theta}}A_i(\pmb{\theta},x)\right|_{\pmb{\theta} = h_{A_i,x}^{-1}(x')}\mathbf{1}_{x'\in \mathcal{O}_{x}^{A_i}}.
\end{equation}
\end{definition}

Sampling from $A_i(x)$ is equivalent to sampling from $p_i(\pmb{\theta})$ defined on $\Theta(A_i)$, 
for $h_{A_i,x}^{-1}(A_i(x)) = \Theta(A_i)$, given the injectivity of $A(\cdot,x)$. 
Furthermore, to sample from $A_i(x)\cap \Gamma_{x}$, we need to sample from the truncated distribution: 
\begin{equation}
\label{truncate_prior}
    p_i(\pmb{\theta})\mathbf{1}_{\pmb{\theta}\in h_{A_i}^{-1}(\mathcal{O}_{x}^{A_i})}.
\end{equation}

Rejection sampling is one effective way to generate augmented samples, but it may be infeasible in high-dimensional cases due to its computational cost. Although various methods, such as nested sampling, adaptive multilevel splitting, or sequential Monte-Carlo sampling, could be viable alternatives, we leave the exploration of these methods for future work. Additionally, the rejection step can be seen as a way to inject humane prior knowledge to samples, which aligns with the intuition on the process of data augmentation. In our experiment, we assume that it would be enough to sample from the subset of $h_{A_i}^{-1}(\mathcal{O}_{x}^{A_i})$, we use human prior knowledge in rejection sampling to roughly determine a subset of it. We begin by selecting the candidate of edges of these subsets, then apply $A(\theta,x)$ for parameters of these edges, and reject or accept these edges by observing the output samples. However, this method is inefficient and risky, rejection sampling is infamous for its inefficiency and the initial selection of edges could be problematic since they may be too small compared to the ground truth. We plan to develop better methods based on our framework in future work.

The conditional distribution $p(x'|x)$ is now well-defined, with its marginal distribution given by $p^*(x') = \int p(x'|x)p(x)\mathrm{d}x$. However, it is important to note that $p(x'|x)$ is unlikely to be tractable. The description above is useful in understanding that an augmented sample is a random variable induced from the of data augmentation, given the measurability of $A(x,\cdot)$.

Finally, it's worth mentioning that generating $M$ samples for each of the $N$ clean samples does not result in $M\times N$ completely independent augmented samples. But \eqref{emp_aug_risk} still yields an unbiased estimator of the shifted population risk, due to the following equation:
\begin{align*}
    \mathbb{E}_{p(x'|y)}\left[\mathcal{L}(y,f(x'))\right] &= \mathbb{E}_{p(x|y)}\left[\left.\mathbb{E}_{p(x'|x,y)}\left[\mathcal{L}\left(y,f(x')\right)\right|x\right]\right],\\
    &=\mathbb{E}_{p(x|y)}\left[\left.\mathbb{E}_{p(x'|x)}\left[\mathcal{L}\left(y,f(x')\right)\right|x\right]\right],\\
 \hat{R}_f(p^*) & = \frac{1}{N}\sum_{i=1}^N \frac{1}{M}\sum_{j=1}^M \mathcal{L}(y_i,f(x'_{i_j})),
\end{align*}
Taking expectation on the both side of the third equation yields the desired result. One should notice that augmented sample $x'$ is independent of $y$ once its original clean sample $x$ is given, which explains the second equality.

The above definition in the case of a finite set of data augmentations and the composition order is given in Appendix \ref{Supplementary Definition}.

By establishing these definitions and concepts in this section, we are provided with a comprehensive understanding of the topic at hand. Which provides a solid foundation for the decomposition of the expected risk in the coming section.

\subsection{The Artificial Shifted Population Risk}
\label{ASPR}
After defining the augmented neighborhood and giving sampling method by defining the adjoint variable $x'$ and its conditional distribution $p(x'|x)$, 
we then evaluate the risk on the shifted population $p(x',y)$. 
One should realize that the collection of all the samples generated from $p(x)$ is a subset of the samples generated from $p^*(x')$. 

For simplicity, we only consider the risk function in the case of cross-entropy and softmax on the shifted population $p(x',y)$, and our method should be able to extend to the other cases similarly:

\begin{equation}\label{equation: RiskonShift}
\begin{aligned}
    R_f(p^*) &= \mathbb{E}_{p^*(x')}\left[H(p(y|x'),q_\phi(y|x'))\right],\\
    & = \mathbb{E}_{p(y)p(x'|y)}[-\ln q_{\phi}(y|x')]\\
    & = \mathbb{E}_{p(y)p(x',x|y)}[-\ln q_{\phi}(y|x')],
\end{aligned}
\end{equation}
among which $\phi$ denotes the parameter of the neural network and $q$ represents a probabilistic surrogate model. The decomposition of this shifted population risk is examined with the following theorem:
\begin{theorem}
\label{equation: ShiftDecomp}
With the shifted population risk in the form of \eqref{equation: RiskonShift}, we have the following decomposition:
\begin{equation}
\begin{aligned}
    \mathbb{E}_{p^*(x')}&\left[H(p(y|x'),q_\phi(y|x'))\right] \\
    &= \mathbb{E}_{p(x)}\left[H(p(y|x),q_{\phi}(y|x))\right] +\mathbb{E}_{p(x)p(y|x)p(x'|x)}\left[\ln\frac{q_{\phi}(y|x)}{q_{\phi}(y|x')}\right],
\end{aligned}
\end{equation}
\end{theorem}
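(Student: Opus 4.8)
The plan is to start from the last line of \eqref{equation: RiskonShift}, namely $R_f(p^*) = \mathbb{E}_{p(y)p(x',x|y)}\left[-\ln q_\phi(y|x')\right]$, and insert a telescoping term into the integrand. First I would use the pointwise identity
\begin{equation*}
    -\ln q_\phi(y|x') = -\ln q_\phi(y|x) + \ln\frac{q_\phi(y|x)}{q_\phi(y|x')},
\end{equation*}
which is valid wherever $q_\phi(\cdot|x)$ and $q_\phi(\cdot|x')$ are strictly positive --- a mild condition automatically satisfied by softmax outputs. Taking expectations and using linearity of the integral splits $R_f(p^*)$ into two pieces, to be handled separately.

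For the first piece, $\mathbb{E}_{p(y)p(x',x|y)}\left[-\ln q_\phi(y|x)\right]$, the integrand does not involve $x'$, so I would marginalize $x'$ out of the joint density $p(x',x|y)$, leaving $\mathbb{E}_{p(y)p(x|y)}\left[-\ln q_\phi(y|x)\right]$. Rewriting $p(y)p(x|y) = p(x)p(y|x)$ and recognizing the inner $y$-expectation as a cross-entropy gives exactly $\mathbb{E}_{p(x)}\left[H(p(y|x),q_\phi(y|x))\right]$, the first term on the right-hand side of the claimed decomposition.

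For the second piece, the only thing to verify is that $p(y)p(x',x|y)$ coincides with $p(x)p(y|x)p(x'|x)$. Indeed $p(y)p(x',x|y) = p(x,x',y) = p(x)p(y|x)p(x'|x,y)$, and the conditional independence $p(x'|x,y) = p(x'|x)$ --- the same fact used just after \eqref{emp_aug_risk}, since $x'$ is a measurable function of $x$ and the augmentation parameter and hence carries no extra information about $y$ once $x$ is fixed --- converts this into $p(x)p(y|x)p(x'|x)$. Substituting into the second expectation yields $\mathbb{E}_{p(x)p(y|x)p(x'|x)}\left[\ln\frac{q_\phi(y|x)}{q_\phi(y|x')}\right]$, completing the decomposition.

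The argument is essentially bookkeeping with conditional densities, so I do not anticipate a genuine obstacle. The two points requiring care are the positivity of $q_\phi$, which makes the logarithmic telescoping identity legitimate and keeps the gap term finite, and the explicit use of the conditional independence $x' \perp y \mid x$, which is precisely where the structure of the augmentation framework of Section~\ref{section:Sampling} enters the proof.
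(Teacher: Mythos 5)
Your proposal is correct and follows essentially the same route as the paper's proof: the telescoping identity $-\ln q_\phi(y|x') = -\ln q_\phi(y|x) + \ln\tfrac{q_\phi(y|x)}{q_\phi(y|x')}$, the factorization $p(y)p(x',x|y) = p(x)p(y|x)p(x'|x)$ via the conditional independence $x' \perp y \mid x$, and marginalizing $x'$ out of the first term; the only difference is that you insert the telescoping term before rewriting the measure while the paper does it in the reverse order, which is immaterial. Your explicit remarks on the positivity of the softmax outputs and on where the conditional independence enters are sound and slightly more careful than the paper's own write-up.
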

The proof of the Theorem \ref{equation: ShiftDecomp} can be found in Appendix \ref{prof equation: ShiftDecomp}.
This demonstrates that in the case of cross-entropy and softmax, 
the \textbf{shifted population risk} is actually the sum of the \textbf{original population expected risk} and a gap term that can be viewed as a \textbf{consistency regularization term}. 
Next, we provide a theorem that explains the second term.

\subsection{Understanding the decomposition of shifted population risk}
From the last section, we have:
\begin{equation}
\begin{aligned}
\mathbb{E}_{p(x)p(y|x)p(x'|x)}\left[\ln\frac{q_\phi(y|x)}{q_\phi(y|x')}\right]=\mathbb{E}_{p(x)p(x'|x)}\left[\ln\frac{q_\phi(y_x|x)}{q_\phi(y_x|x')}\right],
\end{aligned}
\end{equation}
where $y_x$ is the ground true label of clean sample $x$. Since $q_\phi(y|x)$ is modeled with softmax, we have:

\begin{equation}
q_\phi(y_i|x) = \frac{\exp(\mathbf{w}_i^Th_\theta(x))}{\sum_{j=1}^l\exp(\mathbf{w}_j^Th_\theta(x))},
\end{equation}

where $h_\theta(x) = (h_1(x),h_2(x),\dots,h_d(x),\dots,h_D(x))^T$ (the subscript $\theta$ of the component is omitted for convenience) is the feature vector of $x$, and $\mathbf{W}= (\mathbf{w}_1,\dots,\mathbf{w}_l)$ is the weight of the output layer, now $\phi = \{\theta,\mathbf{W}\}$. For every feature $h_d(x)$, its density is:
\begin{equation}
    q_\phi(y_i|h_d(x)) = \frac{\exp( w_{i,d}h_d(x))}{\sum_{j=1}^l \exp(w_{j,d}h_d(x))},
\end{equation}
Inspired by \cite{16he2019data}, we partition the features into major features and minor features by information gains. For major features, the density function $q_\phi(y|h_d)$ concentrates on some point mass. For minor features, the possibility density $q_\phi(y|h_d)$ is relatively uniform.

Then for every given $x$, we have:
\begin{equation}
\begin{aligned}
\mathbb{E}_{p(x'|x)}\left[\ln\frac{q_\phi(y_x|x)}{q_\phi(y_x|x')}\right]= \mathbb{E}_{p(x'|x)}\left[\ln\left(\frac{\sum_{j=1}^l\exp\left((\mathbf{w}_j-\mathbf{w}_x)^Th_\theta(x')\right)}{\sum_{j=1}^l\exp\left((\mathbf{w}_j-\mathbf{w}_x)^Th_\theta(x)\right)}\right)\right],\\
\end{aligned}
\end{equation}
For convenience, we denote
\begin{equation}
\begin{gathered}
\exp\left((\mathbf{w}_j-\mathbf{w}_x)^Th_\theta(x)\right) = \rho_{\theta,x,j},\\
\sum_{j=1}^l \rho_{\theta,x,j}= \rho_{\theta,x},\\
\end{gathered}
\end{equation}
we then examine the relationship of feature and the second term with the following theorem:
\begin{theorem}\label{theorem: OrderofSecondterm}
Assuming that for every $\theta$, sample pair $(x,x')$ and indicies $j$, there exist $\beta_{1,j},\alpha_{1,j} >0 $ such that
\begin{equation}
\begin{gathered}
\alpha_{1,j}<\rho_{\theta,x,j}, \ \rho_{\theta,x',j} < \beta_{1,j},\\
\end{gathered}
\end{equation}
Then for any given $x$, we have:
\begin{equation}
\begin{aligned}
\mathbb{E}_{p(x'|x)}\left[\left|\ln \frac{q_\phi(y_x|x)}{q_\phi(y_x|x')}\right| \right] = \mathbb{E}_{p(x'|x)}\left[\left|\sum_{j=1}^l O\left((\mathbf{w}_j-\mathbf{w}_x)^T\left(h_\theta(x)-h_\theta(x')\right)\right) \right|\right],
\end{aligned}
\end{equation}
\end{theorem}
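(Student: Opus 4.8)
The plan is to turn the statement into a pointwise scalar estimate for the integrand and then integrate. By the algebra already displayed just before the theorem, $\ln\frac{q_\phi(y_x|x)}{q_\phi(y_x|x')}=\ln\rho_{\theta,x'}-\ln\rho_{\theta,x}$, where in parallel with the notation for $x$ I write $\rho_{\theta,x',j}:=\exp\bigl((\mathbf{w}_j-\mathbf{w}_x)^Th_\theta(x')\bigr)$ and $\rho_{\theta,x'}:=\sum_{j=1}^l\rho_{\theta,x',j}$. So it suffices to prove that, for every admissible triple $(\theta,x,x')$ in the support of $p(x'|x)$, one has $\ln\rho_{\theta,x'}-\ln\rho_{\theta,x}=\sum_{j=1}^l O\bigl((\mathbf{w}_j-\mathbf{w}_x)^T(h_\theta(x)-h_\theta(x'))\bigr)$ with implicit constants independent of $(\theta,x,x')$; taking $|\cdot|$ and then $\mathbb{E}_{p(x'|x)}$ of both sides yields the claim.

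First I would write $\ln\rho_{\theta,x'}-\ln\rho_{\theta,x}=\ln(1+u)$ with $u:=(\rho_{\theta,x'}-\rho_{\theta,x})/\rho_{\theta,x}$, and expand the numerator termwise as $\rho_{\theta,x'}-\rho_{\theta,x}=\sum_{j=1}^l(\rho_{\theta,x',j}-\rho_{\theta,x,j})$. To each difference I would apply the mean value theorem to $t\mapsto e^{t}$ on the segment joining $(\mathbf{w}_j-\mathbf{w}_x)^Th_\theta(x)$ and $(\mathbf{w}_j-\mathbf{w}_x)^Th_\theta(x')$, obtaining $\rho_{\theta,x',j}-\rho_{\theta,x,j}=e^{\xi_j}(\mathbf{w}_j-\mathbf{w}_x)^T(h_\theta(x')-h_\theta(x))$ for an intermediate point $\xi_j$. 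Since $e^{\xi_j}$ lies between $\rho_{\theta,x,j}$ and $\rho_{\theta,x',j}$, the hypothesis $\alpha_{1,j}<\rho_{\theta,x,j},\rho_{\theta,x',j}<\beta_{1,j}$ bounds $e^{\xi_j}$ above and below by absolute constants, so $\rho_{\theta,x',j}-\rho_{\theta,x,j}=O\bigl((\mathbf{w}_j-\mathbf{w}_x)^T(h_\theta(x)-h_\theta(x'))\bigr)$, the sign flip of the feature difference being harmless under $O(\cdot)$.

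Next I would propagate this through the division and the logarithm. The same bounds give $\rho_{\theta,x},\rho_{\theta,x'}\in\bigl(\sum_j\alpha_{1,j},\sum_j\beta_{1,j}\bigr)$, so dividing by $\rho_{\theta,x}$ only rescales the constants and forces $u$ into a compact subset of $(-1,\infty)$; on any such set $|\ln(1+u)|\le C|u|$ since $u\mapsto\ln(1+u)/u$ extends continuously (with value $1$) across $u=0$ and is bounded on compacts bounded away from $-1$. Multiplying the bounded factor $\ln(1+u)/u$ back onto the termwise expansion of $u$ exhibits $\ln\frac{q_\phi(y_x|x)}{q_\phi(y_x|x')}$ as a sum over $j$ of a bounded coefficient times $(\mathbf{w}_j-\mathbf{w}_x)^T(h_\theta(x)-h_\theta(x'))$, i.e. $\sum_{j=1}^l O\bigl((\mathbf{w}_j-\mathbf{w}_x)^T(h_\theta(x)-h_\theta(x'))\bigr)$ pointwise in $x'$; integrating against $p(x'|x)$ finishes the proof.

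The part needing the most care is not any single computation but the uniformity of the $O(\cdot)$: every implicit constant must be controlled simultaneously over all $\theta$ and all pairs $(x,x')$ in the support of $p(x'|x)$, and this is precisely the purpose of the standing assumption, which keeps both the mean value factor $e^{\xi_j}$ and the denominator $\rho_{\theta,x}$ from degenerating. Once that uniformity is secured the rest is routine, and I would be explicit that the identity is meant as an order estimate — each side bounded by a fixed multiple of the other — which is how the notation ``$=\sum_j O(\cdot)$'' should be read here.
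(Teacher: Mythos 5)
Your argument is correct and follows essentially the same route as the paper's: a termwise mean value theorem for $\exp$ to control $\rho_{\theta,x'}-\rho_{\theta,x}$ by $\sum_j(\mathbf{w}_j-\mathbf{w}_x)^T(h_\theta(x)-h_\theta(x'))$, with the hypothesis $\alpha_{1,j}<\rho_{\theta,x,j},\rho_{\theta,x',j}<\beta_{1,j}$ supplying the uniform two-sided bounds on the exponential factors and on the logarithm (your factor $\ln(1+u)/u$ is precisely the paper's mean-value estimate for $\ln$ on the interval between $\rho_{\theta,x}$ and $\rho_{\theta,x'}$). If anything, your single exact factorization delivers the two-sided reading of the $O(\cdot)$ more cleanly than the paper's separate upper- and lower-bound chains.
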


The proof of the Theorem \ref{theorem: OrderofSecondterm} can be found in the Appendix \ref{prof theorem: OrderofSecondterm}.
With Theorem \ref{theorem: OrderofSecondterm}, we show how the second term affects the weights. 
Since the data augmentation must cause a large variance in some features particularly in early training phases, which means that
\begin{equation}
\exists \ \eta_1>0, \ \left|h_d(x) -h_d(x')\right|>\eta_1,
\end{equation}
for some features including minor and major features. This forces that $\forall j\in \{1,\dots,l\}$, $w_{j,d}\to w_{x,d} $, resulting in a uniform distribution of $q_\phi(y|h_d(x))$, and such regularization of $w_{j,d}$ is not appropriate for major features. Now let us see how the first term affects the weights
\begin{equation}
\begin{aligned}
\mathbb{E}_{p(x)}\left[H(p(y|x),q_\phi(y|x))\right] = \mathbb{E}_{p(x)}\left[\sum_{i=1}^l\exp\left((\mathbf{w}_j-\mathbf{w}_x)^Th_\theta(x)\right)\right],
\end{aligned}
\end{equation}
And for any minor feature, its variation should not change the result, hence we have $w_{j,d}\approx w_{i,d}$, $1\leq i,j\leq l$. In contrast, the weights of major features should be different: 
\begin{equation}
\exists \ \eta_2 > 0, \ \left|w_{j,d}-w_{x,d}\right|>\eta_2
\end{equation}
now we realize that, with the effect of data augmentation, the first term and the second term have different impacts on the weight of some major features and the same impact on minor features. Since our model mainly relies on major features to provide prediction, such an effect causes an unstable convergence. To highlight the positive effect provided by the first term at the beginning, a simple trick is to add a coefficient $\lambda$ ($\lambda < 1$) to the second term.  




Now we discuss how $\lambda$ may help refine the generalization of the model. We denote the model trained using augmented samples as $f_{aug}$: 
\begin{equation}
\begin{aligned}
R_{f_{aug}}(p^*)  &= \mathbb{E}_{p(y)p(x'|y)}\left[\mathcal{L}(y,f(x'))\right],\\
R_{f_{aug}}(p) &= \mathbb{E}_{p(y)p(x|y)}\left[\mathcal{L}(y,f(x))\right],
\end{aligned}
\end{equation}
Note that we train our model using $\hat{R}_{f_{aug}}(p^*)$ and evaluate the generalization of our model using $R_{f_{aug}}(p)$. Based on the assumption \ref{generalization bound assumption}, with augmented sample and clean sample pairs instead of clean samples alone, we have:
\begin{equation}\label{equation: ERMinequality}
R_{f_{\mathrm{aug}}}(p^*)\leqslant\hat{R}_{f_{\mathrm{aug}}}(p^*)+\phi(C(f),B(N \times M),\delta),
\end{equation}
 Theorem \ref{equation: ShiftDecomp} can then be reformulated with our new formulation:
\begin{equation}\label{eq:unify_upper_bdd}
\begin{aligned}
R_{f_{\mathrm{aug}}}(p^*) &= R_{f_{\mathrm{aug}}}(p) + \mathrm{GAP},\\
\hat{R}_{f_{\mathrm{aug}}}(p^*) &= \hat{R}_{f_{\mathrm{aug}}}(p) +\widehat{\mathrm{GAP}}_{M\times N},
\end{aligned}
\end{equation}
where $\mathrm{GAP}$ is the second term in the right hand side of Theorem \ref{equation: ShiftDecomp} and $\widehat{\mathrm{GAP}}_{M\times N}$ is its empirical estimator using $M\times N$ non iid pairs of $(x,x')$. Hence, \eqref{equation: ERMinequality} is reformulated by:
\begin{equation}\label{eq:key_equality}
\begin{aligned}
R_{f_{\mathrm{aug}}}(p)&\leqslant\hat{R}_{f_{\mathrm{aug}}}(p)+\phi(C(f),B(N \times M),\delta)+ \\
&\widehat{\mathrm{GAP}}_{M\times N}-\mathrm{GAP}.
\end{aligned}
\end{equation}
Now we show that the noise $\widehat{\mathrm{GAP}}_{M\times N}-\mathrm{GAP} \to0 $:

$$
\mathrm{GAP} = \mathbb{E}_{p(x)p(y|x)p(x'|x)}\left[\ln\frac{q_{\phi}(y|x)}{q_{\phi}(y|x')}\right],
$$
we denote $\mathcal{B}(y,g(x,x')) = \ln\frac{q_{\phi}(y|x)}{q_{\phi}(y|x')}$, then we assume that given any clean sample pair $(x_i,y_i)$:
$$
\mathrm{Var}_{p(x'|x_i)}\left[\mathcal{B}(y_i,g(x_i,x'))\right] \leq B,
$$
then for the estimator:
$$
\begin{aligned}
&\mathbb{E}_{p(x,y)p(x'|x)}\left[\mathcal{B}(y,g(x,x'))\right] \\&=\mathbb{E}_{p(x,y)}\left[\left.\mathbb{E}_{p(x'|x)}\left[\mathcal{B}\left(y,g(x,x')\right)\right|x,y\right]\right],\\
&\widehat{\mathrm{GAP}}_{M\times N} = \frac{1}{N}\sum_{i=1}^N \frac{1}{M}\sum_{j=1}^M \mathcal{B}(y_i,g(x_i,x'_{i_j})),
\end{aligned}
$$
where $x'_{i_j}$ denotes augmented samples from $p(x'|x_i)$ consider its variance:
$$
\begin{aligned}
\mathrm{Var}\left(\widehat{\mathrm{GAP}}_{M\times N}\right) &=\frac{1}{N^2M}\sum_{i=1}^N \mathrm{Var}_{p(x'|x_i)}\left[\mathcal{B}(y_i,g(x_i,x'))\right],
\end{aligned}
$$
with the assumption:
$$
\mathrm{Var}\left(\widehat{\mathrm{GAP}}_{M\times N}\right) \leq\frac{B}{NM},
$$
then the variance is of order $O(1/NM)$, which indicates the faster convergence speed.

We determine that the generalization of model depends on $\hat{R}_{f_{\mathrm{aug}}}(p)$ instead of what we directly optimize: $\hat{R}_{f_{\mathrm{aug}}}(p^*)$. 
Hence we would like to keep the consistency between $\hat{R}_{f_{\mathrm{aug}}}(p) $ and  $\hat{R}_{f_{\mathrm{aug}}}(p^*)$, 
$i.e.$, the decreasing of $\hat{R}_{f_{\mathrm{aug}}}(p^*)$ guarantees that of $\hat{R}_{f_{\mathrm{aug}}}(p) $ to ensure the improvement of generalization when training the model. 
As it is analyzed before, $\widehat{\mathrm{GAP}}_{M\times N}$ may lead to different weights of some major features compared with $\hat{R}_{f_{\mathrm{aug}}}(p)$ in early training stages, 
which will destroy such consistency. This indicates the importance of our proposed coefficient $\lambda$.

\section{Experiment}
\label{others}
We demonstrate the standard training strategy in Algorithm \ref{algorithm: standardAlg} 
and our proposed training strategy in Algorithm \ref{algorithm: OurAlg} in Appendix \ref{AppendixPseudoCode}. We also conduct an experiment on the selection of the hyperparameter $\lambda$ of Algorithm \ref{algorithm: OurAlg} in Appendix \ref{choose_lambda}.

\subsection{Experiment Implementation}
\paragraph{Standard Scenario Experiment: Validation Models and Datasets} We have conducted experiments on CIFAR10/100 \cite{krizhevsky2009learning}, 
Food101 \cite{bossard2014food}, and ImageNet (ILSVRC-2012) \cite{russakovsky2015imagenet} with various models to evaluate our training strategy. 
For each of them, a validation set is split from the training set to find networks with the best performances. More dataset splitting details are shown in Appendix \ref{DSD}.
In this paper, ResNet \cite{03he2016deep} and WideResNet \cite{BMVC2016_87} are trained with different strategies. 
For datasets CIFAR10/100 and Food101, ResNet-18, ResNet-50, WideResNet-28-10 and WideResNet-40-2 are chosen as our baseline models. 
For ImageNet, ResNet-50 and ResNet-101 are used for evaluation. 
All images in baseline (standard method) and our method are processed with same augmentation (horizontal flips, random crops and random rotation). 
$\lambda$ was selected to 0.5 for it achieve the best performance among all the experiments with our strategy. For a fair comparison, we set the basic batch size (bbs) and performed standard method experiments with both 1x bbs and 2x bbs (our method actually takes twice the amount of data sample) to ablate the estimation error effect caused by the batch size. More details about data augmentation and network training are shown in Appendix \ref{D_aug} and Appendix \ref{T_Detail}.

To ensure that our strategy is applicable to other settings, we conduct experiments in the following two cases:

\paragraph{OOD Scenario Experiment: Validation Models and Datasets} 
Experiments on PACS \cite{li2017deeper} are conducted using ResNet-18 and ResNet-50 \cite{03he2016deep}. In these experiments, we employed the leave-one-domain-out strategy for OOD validation. For image augmentation, we followed the same approach as Domainbed \cite{gulrajani2020search}, both in the ERM algorithm and our proposed method. Further information regarding data augmentation and network training can be found in Appendix \ref{D_aug} and Appendix \ref{T_Detail}.

\paragraph{Long-Tailed Scenario Experiment: Validation Models and Datasets}
We consider long-tail (LT) imbalance and conducted experiments on LT CIFAR-10 \cite{cui2019class} using ResNet-18. 
We keep the validation set and test set unchanged and reduce the number of training set per class according to the function $n=n_i\mu^i$, where $n_i$ is the original number of the $i-th$ class of the training set (following \cite{cui2019class}). $\mu$ is between 0 and 1, which is determined by the number of training samples in the largest class divided by the smallest. This ratio is called imbalance ratio and it is set from 10 to 100 in our settings. Further information regarding training hyperparameters can be found in Appendix \ref{D_aug} and Appendix \ref{T_Detail}.

\subsection{Experimental Results}\label{sec:exp_result_e}

\paragraph{Settings and instructions} For standard scenario experiment, we select the model with the highest validation accuracy during training and report the test accuracy in Figure \ref{fig:Standard Experiment}. The results with error bars are presented at Appendix \ref{error_bar}, where we have conducted three independent experiments and calculated the mean values as the results on CIFAR10/100 and Food101 and only one independent experiment on ImageNet (ILSVRC2012) because of computational constraints.

As for the OOD scenario experiments, we have conducted three independent experiments and select the model with the best top-1 accuracy on the test domain. The results with error bar can be seen in Figure \ref{fig:pacs} and Appendix \ref{error_bar} Table \ref{tab:pacs}.

For long-tailed scenario experiment, we use the Area Under the Curve (AUC), Average Precision (AP) and top-1 accuracy as evaluation metrics. We select the model with the best AUC on the validation set during training and report the results on the test set in Figure \ref{fig:LT Experiment} and Appendix \ref{error_bar} Table \ref{tab:LT}.

\begin{figure*}[t]
    \centering
    {
        \label{cifar10}
        \includegraphics[width=0.23\linewidth]{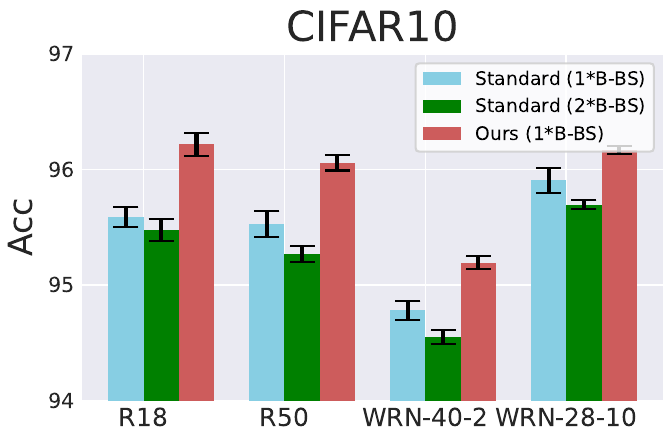}
    }
    {
        \label{cifar100}
        \includegraphics[width=0.23\linewidth]{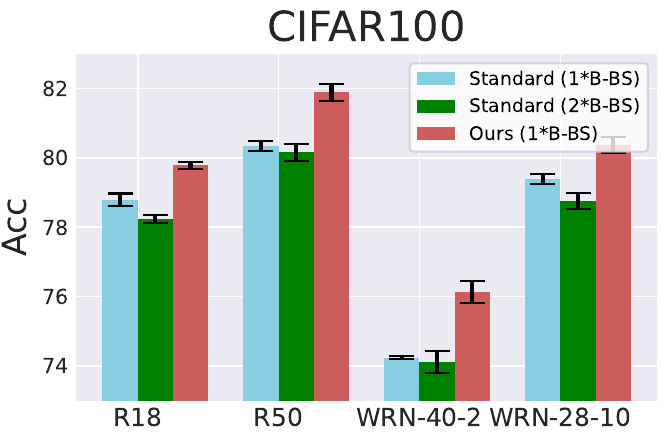}
    } 
    {
        \label{Food101}
        \includegraphics[width=0.23\linewidth]{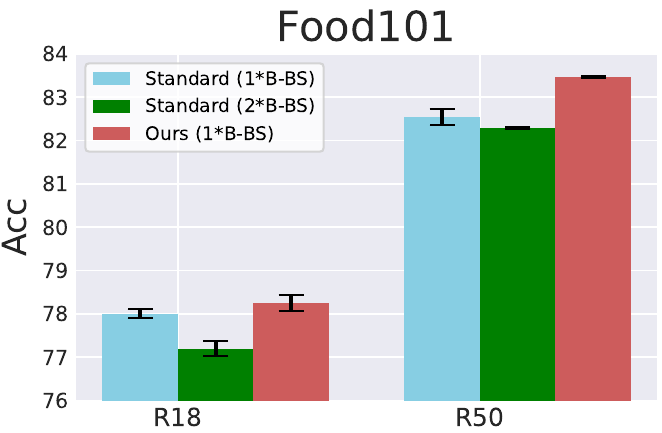}
    }
    {
        \label{ImageNet}
        \includegraphics[width=0.23\linewidth]{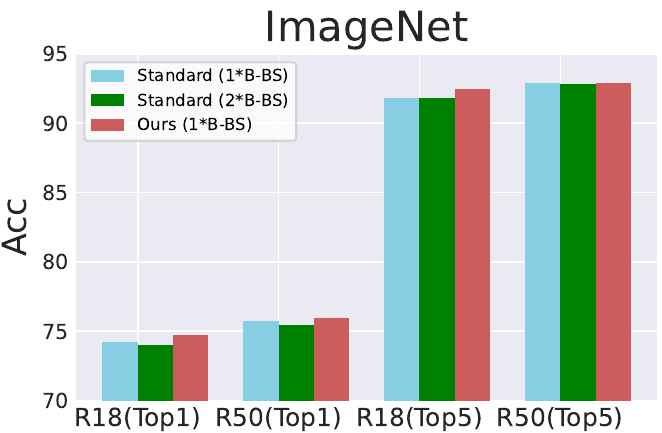}
    }
    \caption{Top-1 accuracy(\%) with error bar (mean$\pm$std) on CIFAR10/100, Food101 and ImageNet  on the test set. The Y-axis is the Top-1 accuracy and the X-axis is the type of network.}
    \label{fig:Standard Experiment}
\end{figure*}

\begin{figure}[t]
    \centering
    {
        \label{OODR18}
        \includegraphics[width=0.475\linewidth]{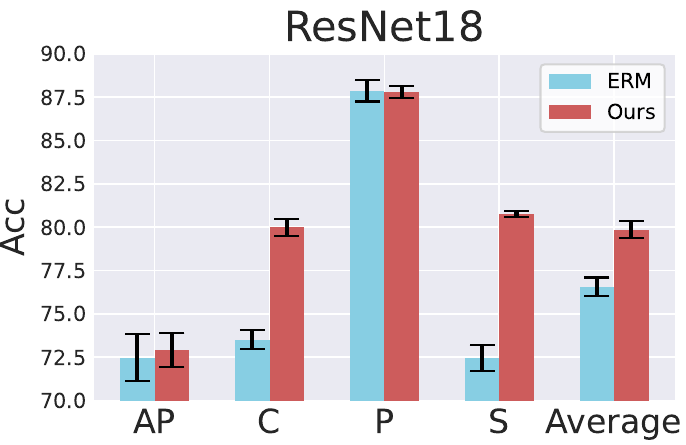}
    }
    {
        \label{OODR50}
        \includegraphics[width=0.475\linewidth]{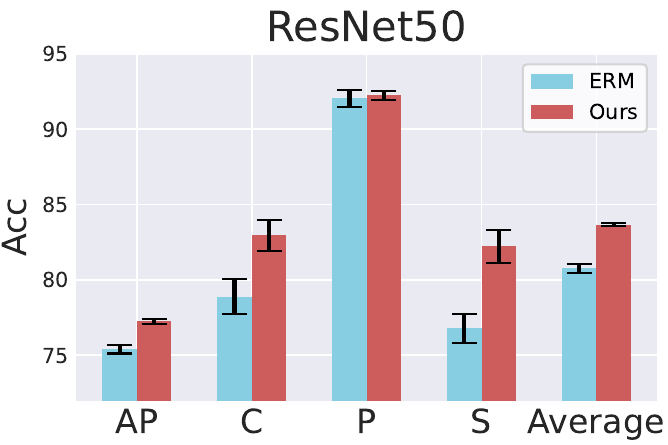}
    } 
    \caption{Top-1 accuracy(\%) with error bar (mean$\pm$std) over the four test domain of PACS and their average. The X-axis is test-domain and the Y-axis is the Top-1 accuracy.}
    \label{fig:pacs}
\end{figure}

\begin{figure}[t]
    \centering
    {
        \label{LT_AUC}
        \includegraphics[width=0.31\linewidth]{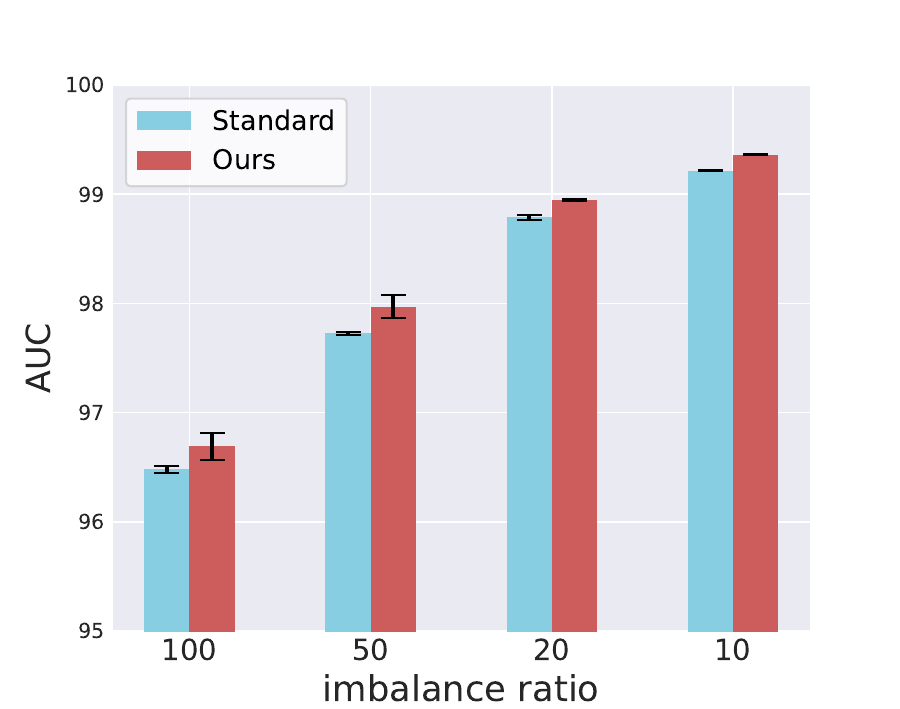}
    }
    {
        \label{LT_ACC}
        \includegraphics[width=0.31\linewidth]{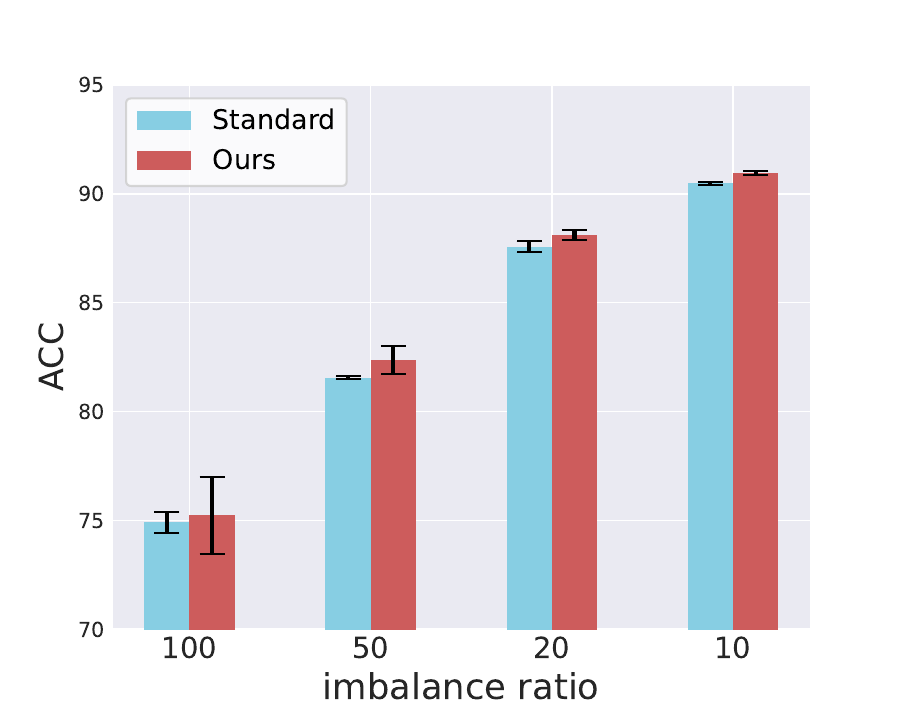}
    } 
    {
        \label{LT_AP}
        \includegraphics[width=0.31\linewidth]{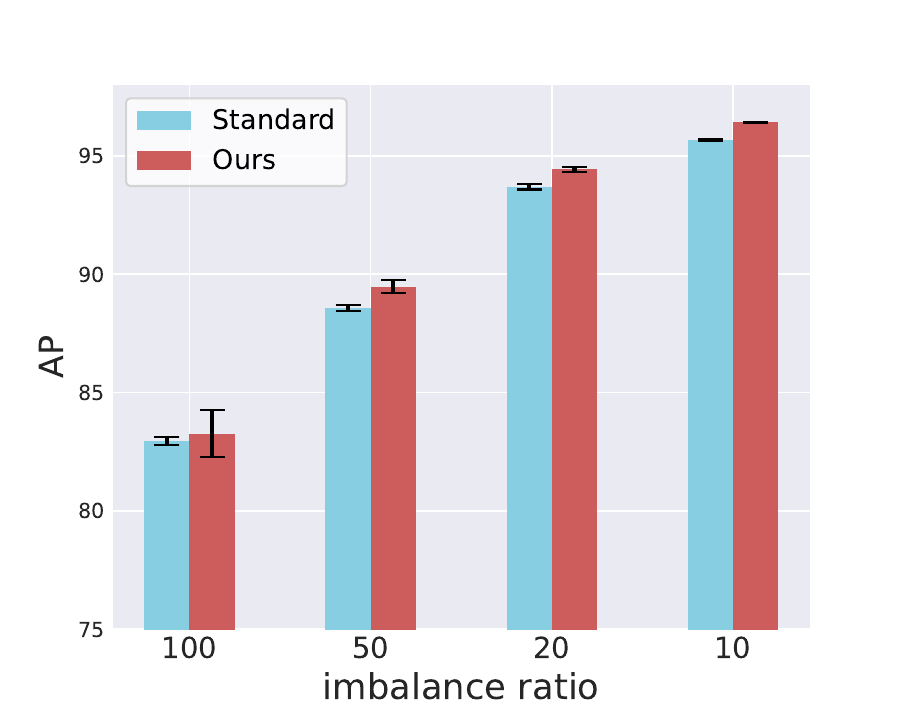}
    }
    \caption{AUC, Top-1 accuracy (\%), AP with error bar (mean$\pm$std) of Resnet-18 on the long-tailed scenario experiment (LT-CIFAR10). The X-axis of the figures is the value of the imbalance ratio.}
    \label{fig:LT Experiment}
\end{figure}




    

\paragraph{Experiment Analysis}
From our experimental results (Figure \ref{fig:Standard Experiment}, Figure \ref{fig:pacs}, Figure \ref{fig:LT Experiment}), we can see that the model trained with our proposed consistency regularization strategy of data augmentation converges to a better local optimum. 

From Figure \ref{Figure 3}, 
we can see the validation set performance of our method even exceeds the training set performance of the standard data augmentation training method in 
almost the whole process of training. 
This demonstrates the improvement of generalization after adding the coefficient. 

As demonstrated in Figure \ref{Fcc} and Figure \ref{xunliantuxiang}, our training strategy leads to a stable convergence compared with the standard data augmentation training strategy. 
The stable convergence is caused by the coefficient $\lambda $, as we discuss in section \ref{ASPR}. 
The coefficient $\lambda $ diminishes the negative effect of estimate variance, resulting in a more stable convergence. The training process for all circumstances is presented in Appendix \ref{appendix:accuracycurve}

\section{Conclusion and Discussion}


\paragraph{Rethinking of Shifted Population}
In this paper, we develop a new set of definitions for shifted population, augmented samples and its conditional distribution. 
We leverage our proposed definition to establish the decomposition of the shifted population risk, providing an explanation for how data augmentation enhances the generalization ability of model.
\paragraph{Better Training Strategy}
Based on the proposed decomposition, 
we realize that the key to improving generalization lies in keeping the consistency between $\hat{R}_{f_{\text{aug}}}(p^*)$ and $\hat{R}_{f_{\text{aug}}}(p)$, 
which is likely to be violated by the gap term specifically in the early training stages. Adding a coefficient to the gap term refines this, and it is proposed as a training strategy with augmentation. As demonstrated in our experiment, 
our method outperforms the standard augmentation training strategy. Meanwhile, our proposed strategy is highly related to the augmentation schedule, an existing training strategy. Our work could provide comprehensive understanding on how it works. What's more, there is more than one solution to the problem of the gap term, which is left for future work.
\paragraph{Limitation}
Considering the fact that this paper mainly conducts analysis in the case of classification tasks, 
some of the results proposed in this paper lack versatility. However, the framework of the analysis is transferable, 
and based on the definition of expected risk, similar results can be attained on other tasks. 
Conditional distribution of adjoint variable $p(x'|x)$ is intractable given the fact that although the differentiability of most of the classic augmentations has been verified in other works, 
there are data augmentations that have not, some of them may even be not genuinely differentiable. 
Hence, other definitions of $p(x'|x)$ that bypass the necessity of differentiability can be explored in future work.

\begin{credits}
\subsubsection{\ackname} We sincerely thank all reviewers for their efforts to improve the quality of this paper. This work was supported in part by the Guangdong Basic and Applied Basic Research Foundation (project code: 2024A1515011896, 2023A1515012943, and 2022A1515110568), and the Guangzhou Basic and Applied Basic Research Foundation (project code: 2023A04J1683) and Technological Innovation Strategy of Guangdong Province, China (project code: pdjh2022a0030) and Guangdong Province College Students PanDeng Project (project code: 202210561138).

\subsubsection{\discintname}
The authors have no competing interests to declare that are
relevant to the content of this article.
\end{credits}
%

\clearpage
\appendix

\section{Supplementary Definition}
\label{Supplementary Definition}
In this section we introduce the CAN of $x_0$ together with the conditional distribution defined on it induced by a finite set of augmentations $\{A_1,\dots A_m\}$ with parameter space $\Theta(A_1),\dots\Theta(A_m)$ for a given composite order $\sigma$. Realizing that with the given order of composition, there is a new data augmentation:
\begin{equation}
\begin{aligned}
    A_{i_m}\circ\dots\circ A_{i_1}:(\Theta(A_{i_m})\times\dots\times \Theta(A_{i_1}))\times \mathcal{X}\rightarrow \mathcal{X}\\
((\pmb{\theta}_{i_m},\dots\pmb{\theta}_{i_1}),x_0) \mapsto A_{i_m}(\pmb{\theta}_{i_m})\circ\dots\circ A_{j_1}(\pmb{\theta}_{i_1},x)
\end{aligned}
\end{equation}
Note that $\forall\ x' \in \mathcal{X}, \pmb{\theta}_i \in \Theta(A_i),\pmb{\theta}_j \in \pmb{\Theta}(A_j),A_i(\pmb{\theta}_i)\circ A_j(\pmb{\theta}_j,x') = A_i(\pmb{\theta}_i,A_j(\pmb{\theta}_j,x'))$. 

The augmentation neighborhood of $x_0$ induced by $\mathcal{A}$ with composite order $\sigma$ is:

\begin{definition}\label{definition: MultipleAugNeib}
For any given clean sample $x \in \mathcal{X}$, and the finite set of augmentations  $\mathcal{A} = \{A_1,\dots A_m\}$ with parameter space , for a given composite order  $\sigma$ such that $\sigma(1,\dots,m) = (j_1,\dots,j_m)$, the \textbf{augmentation neighborhood of} \boldmath{$x_0$} induced by $\mathcal{A}$ for a given composite order  $\sigma$ is defined as:
\begin{equation}
        A^{\sigma}(x)=\bigcup\limits_{(\theta_{j_1},\theta_{j_2},\ldots,\theta_{j_m}) \in \atop \Theta(A_{j_1}) \times \Theta(A_{j_2})\times\ldots\times\Theta(A_{j_m})} (A_{j_m}(\theta_{j_m})\circ\cdots\circ A_{j_2}(\theta_{j_2})\circ A_{j_1}(\theta_{j_1},x))
    \end{equation}
\end{definition}

Although there are $m!$ different ways of composition and composite order matters, but for simplicity, leave the matter to future work. We usually omit this superscript and denote it as $\mathcal{A}(x)$.

Now the CAN of $x_0$ induced by $\mathcal{A}$ with composite order $\sigma$ is:

\begin{definition}\label{definition: CANofMultiAug}
For any given clean sample $x \in \mathcal{X}$, and the finite set of augmentations  $\mathcal{A} = \{A_1,\dots A_m\}$ with parameter space $\Theta(A_1),\dots\Theta(A_m)$ and a given order of composition $\sigma$, the \textbf{consistent augmentation neighborhood} (CAN for short) of $x_0$ induced by $\mathcal{A}$ is defined as :
    \begin{equation}
        \mathcal{O}_{x}^{A,\sigma}:=A^{\sigma}(x)\cap\Gamma_{x}
    \end{equation}
\end{definition}
For the same reason that the permutation $\sigma$ is assumed to be given, part of the superscript is omitted and can be simplified as $\mathcal{O}^{\mathcal{A}}_{x}$.

When we need to sample in the CAN of $x$ induced by a finite set of augmentation $\mathcal{A}$ for a given order of composition $\sigma$. For any $x'\in \mathcal{A}^{\sigma_p}(x)$, there exists only one $\pmb{\theta} = h^{-1}_{A_{i_m}\circ\dots\circ A_{i_1},x}(x') = (\pmb{\theta}_{i_m},\dots\pmb{\theta}_{i_1})$. Since $(\pmb{\theta}_{i_m},\dots\pmb{\theta}_{i_1})$ are mutually independent, the conditional distribution $p(x'|x)$ induced by it with a supporting set on $\mathcal{O}_x^{\mathcal{A}}$ is defined as:

\begin{definition}\label{ConditionalMultiple}
For a given clean sample $x\sim p(x)$,  the conditional distribution $p(x'|x)$ of the adjoint variable $x'$ with $\mathrm{supp}(p(x'|x)) = \mathcal{O}_{x}^{\mathcal{A}}$ is given as
\begin{equation}
\begin{aligned}
    & p(x'|x)\\
    & = \frac{p_{i_1}(\pmb{\theta}_{i_1})\dots p_{i_m}(\pmb{\theta}_{i_m})\left|\frac{\partial A(\pmb{\theta}_{i_m})\circ\dots \circ A(\pmb{\theta}_{i_j})\circ\dots \circ A(i_1,x))}{\partial \pmb{\theta}_{i_m}\dots \pmb{\theta}_{i_j}\dots\pmb{\theta}_{i_1}}\right|_{\pmb{\theta} = h^{-1}_{A_{i_m}\circ\dots\circ A_{i_1},x}(x')}}{Z_3}\mathbf{1}_{x'\in \mathcal{O}^{\mathcal{A}}_x}
\end{aligned}
\end{equation}
\end{definition}

$Z_3$ is the normalization constant since it is limited on the level set of $x_0$, and the sampling method is similar to that when single augmentation is considered. 
\section{Proof of Theorems}

\subsection{Proof of Theorem \ref{equation: ShiftDecomp}}
\label{prof equation: ShiftDecomp}
\begin{theorem}{equation: ShiftDecomp}
In the case of cross-entropy and softmax, the shifted population risk has the following decomposition:
\begin{equation}
    \begin{aligned}
        & \mathbb{E}_{p^*(x')}\left[H(p(y|x'),q_\phi(y|x'))\right]\\
        & = \mathbb{E}_{p(x)}\left[H(p(y|x),q_{\phi}(y|x))\right]+\mathbb{E}_{p(x)p(y|x)p(x'|x)}\left[\ln\frac{q_{\phi}(y|x)}{q_{\phi}(y|x')}\right]
    \end{aligned}
\end{equation}

\end{theorem}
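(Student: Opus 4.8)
The plan is to start from the third expression for the shifted risk in \eqref{equation: RiskonShift}, namely $\mathbb{E}_{p(y)p(x',x|y)}\left[-\ln q_\phi(y|x')\right]$, and convert it into the claimed decomposition by an add-and-subtract trick combined with the tower property of conditional expectation. First I would factor $p(x',x|y) = p(x|y)\,p(x'|x,y)$ and invoke the conditional independence $p(x'|x,y) = p(x'|x)$ — this is the observation already recorded in Section \ref{section:Sampling}, since $x'$ is the measurable image $A_i(\theta,x)$ of $x$ with the augmentation parameter $\theta$ drawn from $p_i(\theta)$ independently of the label $y$. This rewrites the shifted risk as $\mathbb{E}_{p(y)p(x|y)p(x'|x)}\left[-\ln q_\phi(y|x')\right]$.

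Next I would insert $\ln q_\phi(y|x)$ via the identity $-\ln q_\phi(y|x') = -\ln q_\phi(y|x) + \ln\frac{q_\phi(y|x)}{q_\phi(y|x')}$ and split the expectation by linearity. In the first summand the integrand no longer depends on $x'$, so integrating out $x'$ against $p(x'|x)$ — which has total mass $1$ on its support $\mathcal{O}_x^{A_i}$ by Definition \ref{definition: ConditionalSingle} — collapses it to $\mathbb{E}_{p(y)p(x|y)}\left[-\ln q_\phi(y|x)\right] = \mathbb{E}_{p(x,y)}\left[-\ln q_\phi(y|x)\right]$, which by the definition of cross-entropy equals $\mathbb{E}_{p(x)}\left[H(p(y|x),q_\phi(y|x))\right]$. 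The second summand, after rewriting $p(y)p(x|y) = p(x)p(y|x)$, is exactly the stated gap term $\mathbb{E}_{p(x)p(y|x)p(x'|x)}\left[\ln\frac{q_\phi(y|x)}{q_\phi(y|x')}\right]$. Adding the two pieces gives the decomposition.

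The step I would treat most carefully — and the only place where anything beyond bookkeeping is needed — is the justification of $p(x'|x,y) = p(x'|x)$ together with the legitimacy of the Fubini/Tonelli interchange and of the add-subtract splitting, both of which require the expectations involved to be well defined and finite. Under cross-entropy with softmax the integrand $-\ln q_\phi(y|x')$ is bounded below, and finiteness of $R_f(p^*)$ is tacit whenever the risk is written down; if one wants a fully rigorous interchange one can import the boundedness hypotheses used later (the $\alpha_{1,j},\beta_{1,j}$ bounds of Theorem \ref{theorem: OrderofSecondterm}), which make $\ln\frac{q_\phi(y|x)}{q_\phi(y|x')}$ bounded and hence integrable. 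Everything else is an application of the law of total expectation, and I do not anticipate any genuinely hard estimate.
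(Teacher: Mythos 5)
Your proposal is correct and follows essentially the same route as the paper's own proof: factor $p(x',x|y)=p(x'|x)p(x|y)$ using the conditional independence of $x'$ and $y$ given $x$, add and subtract $\ln q_\phi(y|x)$, and marginalize $x'$ out of the first summand. The extra care you take with integrability and the Fubini interchange is a refinement the paper omits, not a different argument.
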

\begin{proof}
\begin{equation}
\begin{aligned}
&\mathbb{E}_{p(y)p(x',x|y)}[-\ln q_{\phi}(y|x')]\\
&= \mathbb{E}_{p(y)p(x'|x)p(x|y)}[-\ln q_{\phi}(y|x')]\\&=\mathbb{E}_{p(x)p(y|x)p(x'|x)}\left [-\ln q_{\phi}(y|x')\right]\\
&=\mathbb{E}_{p(x)p(y|x)p(x'|x)}\left[\ln\frac{1}{q_\phi(y|x)}+\ln\frac{q_{\phi}(y|x)}{q_{\phi}(y|x')}\right]\\
&=\mathbb{E}_{p(x)p(x'|x)p(y|x)}[-\ln q_{\phi}(y|x)]+
\mathbb{E}_{p(x)p(x'|x)p(y|x)}\left[\ln\frac{q_{\phi}(y|x)}{q_{\phi}(y|x')}\right]\\
&=\mathbb{E}_{p(x)p(y|x)}[-\ln q_{\phi}(y|x)]+
\mathbb{E}_{p(x)p(x'|x)p(y|x)}\left[\ln\frac{q_{\phi}(y|x)}{q_{\phi}(y|x')}\right]\\
&= \mathbb{E}_{p(x)}\left[H(p(y|x),q_{\phi}(y|x))\right]+\mathbb{E}_{p(x)p(y|x)p(x'|x)}\left[\ln\frac{q_{\phi}(y|x)}{q_{\phi}(y|x')}\right]
\end{aligned}
\end{equation}
One should note that the first equation holds because $p(x',x|y)$ = $p(x'|x,y)p(x|y) $ is $ p(x'|x)p(x|y)$ since $x'$ is a random variable that independent of $y$ after $x$ is given.
\end{proof}
\subsection{Proof of Theorem \ref{theorem: OrderofSecondterm}}
\label{prof theorem: OrderofSecondterm}
\begin{theorem}{theorem: OrderofSecondterm}
Assuming that for every $\theta$ and every $(x,x')$, there exist $\beta_{1,j},\alpha_{1,j} >0 $ such that
\begin{equation}\label{ass:low_bdd_up_bdd}
\begin{gathered}
\alpha_{1,j}<\rho_{\theta,x,j}, \ \rho_{\theta,x',j} < \beta_{1,j}
\end{gathered}
\end{equation}
\end{theorem}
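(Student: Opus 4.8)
The plan is to analyze the log-ratio $\ln\frac{q_\phi(y_x|x)}{q_\phi(y_x|x')}$ by rewriting it in terms of the quantities $\rho_{\theta,x,j}$ and $\rho_{\theta,x',j}$ and then performing a first-order Taylor expansion. First I would recall from the softmax identity already derived in the text that
\begin{equation*}
\ln\frac{q_\phi(y_x|x)}{q_\phi(y_x|x')} = \ln\left(\frac{\sum_{j=1}^l \rho_{\theta,x',j}}{\sum_{j=1}^l \rho_{\theta,x,j}}\right) = \ln\rho_{\theta,x'} - \ln\rho_{\theta,x},
\end{equation*}
so the task reduces to controlling the difference $\ln\rho_{\theta,x'}-\ln\rho_{\theta,x}$. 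Writing $\rho_{\theta,x',j} - \rho_{\theta,x,j}$ in the form $\exp\big((\mathbf{w}_j-\mathbf{w}_x)^Th_\theta(x')\big) - \exp\big((\mathbf{w}_j-\mathbf{w}_x)^Th_\theta(x)\big)$, I would apply the mean value theorem (or a one-term Taylor expansion of $t\mapsto e^t$) to each summand: for each $j$ there is a point $\xi_j$ between $(\mathbf{w}_j-\mathbf{w}_x)^Th_\theta(x)$ and $(\mathbf{w}_j-\mathbf{w}_x)^Th_\theta(x')$ with
\begin{equation*}
\rho_{\theta,x',j}-\rho_{\theta,x,j} = e^{\xi_j}\,(\mathbf{w}_j-\mathbf{w}_x)^T\big(h_\theta(x')-h_\theta(x)\big),
\end{equation*}
and the assumption \eqref{ass:low_bdd_up_bdd} gives $\alpha_{1,j} < e^{\xi_j} < \beta_{1,j}$ (up to possibly enlarging the constants to cover the intermediate value), so each term is $O\big((\mathbf{w}_j-\mathbf{w}_x)^T(h_\theta(x)-h_\theta(x'))\big)$.

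Next I would assemble these into the log-difference. Using $\ln\rho_{\theta,x'} - \ln\rho_{\theta,x} = \ln\!\big(1 + \frac{\rho_{\theta,x'}-\rho_{\theta,x}}{\rho_{\theta,x}}\big)$ and the bound $|\ln(1+u)| \le \frac{|u|}{1-|u|}$ valid for $|u|<1$ (or simply $\ln(1+u)=O(u)$ when $u$ is controlled), together with the lower bound $\rho_{\theta,x} = \sum_j \rho_{\theta,x,j} > \sum_j \alpha_{1,j} > 0$ from the hypothesis, the denominator is bounded away from zero uniformly, so
\begin{equation*}
\left|\ln\frac{q_\phi(y_x|x)}{q_\phi(y_x|x')}\right| = \left|\sum_{j=1}^l O\big((\mathbf{w}_j-\mathbf{w}_x)^T(h_\theta(x)-h_\theta(x'))\big)\right|.
\end{equation*}
Finally I would take the expectation $\mathbb{E}_{p(x'|x)}[\cdot]$ of both sides; since the $O(\cdot)$ constants are uniform in $(x,x')$ and $\theta$ by assumption, the expectation passes through and yields exactly the claimed identity. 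The interpretation statement that follows in the paper (large feature variance forces $w_{j,d}\to w_{x,d}$) then reads off directly from this expression.

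The main obstacle, and the place where the argument is really a heuristic rather than a rigorous bound, is the bookkeeping of the $O(\cdot)$ notation: one must make precise in what sense the $O$ is taken (here, as $h_\theta(x)-h_\theta(x')\to 0$, with constants depending only on the $\alpha_{1,j},\beta_{1,j}$ and on $l$), and one must ensure the intermediate point $\xi_j$ still lies in a region where the exponential is sandwiched between the two given bounds — this may require slightly relaxing \eqref{ass:low_bdd_up_bdd} to an interval hypothesis, or observing that since $e^{\xi_j}$ lies between $\rho_{\theta,x,j}$ and $\rho_{\theta,x',j}$ it is automatically within $(\min_j\alpha_{1,j}, \max_j\beta_{1,j})$. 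A secondary subtlety is that the $O$-terms inside the sum and inside the expectation are not independent, so one cannot naively move the absolute value around; keeping the absolute value outside the sum throughout, as in the statement, avoids this issue. Once these points are pinned down, the rest is the routine Taylor-expansion computation sketched above.
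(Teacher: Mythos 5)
Your approach is essentially the paper's: a mean-value/Taylor estimate on $\exp$ applied to each difference $\rho_{\theta,x,j}-\rho_{\theta,x',j}$, followed by a mean-value estimate on the logarithm, with the hypothesis $\alpha_{1,j}<\rho_{\theta,x,j},\,\rho_{\theta,x',j}<\beta_{1,j}$ supplying uniform constants that justify the $O(\cdot)$. For the upper-bound direction your argument goes through (modulo the technical point below). The substantive omission is that the paper reads the claimed equality as a two-sided order statement: its proof explicitly shows that $|\sum_{j=1}^l O((\mathbf{w}_j-\mathbf{w}_x)^T(h_\theta(x)-h_\theta(x')))|$ is simultaneously an upper \emph{and} a lower bound for $|\ln\frac{q_\phi(y_x|x)}{q_\phi(y_x|x')}|$. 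The lower bound is obtained by applying the mean value theorem to $e^t$ on the interval between $\ln\rho_{\theta,x}$ and $\ln\rho_{\theta,x'}$, giving $|\rho_{\theta,x}-\rho_{\theta,x'}|\le \max\{\beta^*,1\}\,|\ln\rho_{\theta,x}-\ln\rho_{\theta,x'}|$, combined with a Taylor expansion of the exponential showing $\rho_{\theta,x}-\rho_{\theta,x'}\ge \alpha^*\,O((\mathbf{w}_j-\mathbf{w}_x)^T(h_\theta(x)-h_\theta(x')))$. Your write-up only carries out the direction $|\ln(1+u)|\le |u|/(1-|u|)$, so as it stands it proves one inequality of the two. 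The good news is that your exact identity $\rho_{\theta,x',j}-\rho_{\theta,x,j}=e^{\xi_j}(\mathbf{w}_j-\mathbf{w}_x)^T(h_\theta(x')-h_\theta(x))$, with $e^{\xi_j}$ sandwiched between $\rho_{\theta,x,j}$ and $\rho_{\theta,x',j}$ and hence in $(\alpha_{1,j},\beta_{1,j})$, already contains what is needed for both directions; closing the gap is one more line, not a new idea.

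A secondary issue: the bound $|\ln(1+u)|\le |u|/(1-|u|)$ requires $|u|<1$, i.e. $|\rho_{\theta,x'}-\rho_{\theta,x}|<\rho_{\theta,x}$, and this does not follow from the stated hypothesis alone (the $\beta_{1,j}$ may be much larger than the $\alpha_{1,j}$). The paper avoids this by applying the Lagrange mean value theorem to $\ln t$ directly on $[\min\{\rho_{\theta,x},\rho_{\theta,x'}\},\max\{\rho_{\theta,x},\rho_{\theta,x'}\}]$, where the derivative is bounded by $1/\alpha^*$ unconditionally because $\rho_{\theta,x},\rho_{\theta,x'}>\sum_j\alpha_{1,j}\ge\alpha^*$; you should do the same rather than route through $\ln(1+u)$. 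Your remarks on the $O(\cdot)$ bookkeeping and on keeping the absolute value outside the sum are consistent with how the paper handles these points.
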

Then for any given $x$, we have:
\begin{equation}
\begin{aligned}
\mathbb{E}_{p(x'|x)}\left[\left|\ln \frac{q_\phi(y_x|x)}{q_\phi(y_x|x')}\right| \right] = \mathbb{E}_{p(x'|x)}\left[\left|\sum_{j=1}^l O\left((\mathbf{w}_j-\mathbf{w}_x)^T\left(h_\theta(x)-h_\theta(x')\right)\right) \right|\right]
\end{aligned}
\end{equation}
\begin{proof}
One proper way to prove is to show that 
\begin{equation}
    \left|\sum_{j=1}^l O\left((\mathbf{w}_j-\mathbf{w}_x)^T\left(h_\theta(x)-h_\theta(x')\right)\right) \right|
\end{equation} is an upper and lower bound of $\left|\ln \frac{q_\phi(y_x|x)}{q_\phi(y_x|x')}\right|$  for any given $x'$ and $x$ simultaneously. We firstly note that
\begin{equation}
    \ln \frac{q_\phi(y_x|x)}{q_\phi(y_x|x')} = \ln \rho_{\theta,x} - \ln \rho_{\theta,x'},
\end{equation}
to begin with, we replace $\alpha_{1,j}$ and $\beta_{1,j}$ with $\alpha^* = \underset{j}{\min} \{\alpha_{1,j}\}$ and $\beta^* = \underset{j}{\max}\{\beta_{1,j}\}$ in \eqref{ass:low_bdd_up_bdd},
the following holds because of the lipschitz continuousity of $\exp(x)$ with $x$ in a bound interval, as it is indicated by \ref{ass:low_bdd_up_bdd}: 
\begin{equation}
\begin{aligned}
    & \left|\exp\left((\mathbf{w}_j-\mathbf{w}_x)^Th_\theta(x)\right)-\exp\left((\mathbf{w}_j-\mathbf{w}_x)^Th_\theta(x')\right) \right| \\
 & \leq \beta_{1,j}\left|(\mathbf{w}_j-\mathbf{w}_x)^T\left(h_\theta(x)-h_\theta(x')\right)\right|,
\end{aligned}
\end{equation}
\begin{equation}
\begin{aligned}
    & \exp\left((\mathbf{w}_j-\mathbf{w}_x)^Th_\theta(x)\right)-\exp\left((\mathbf{w}_j-\mathbf{w}_x)^Th_\theta(x')\right)\\
 & \geq \alpha_{1,j}(\mathbf{w}_j-\mathbf{w}_x)^T\left(h_\theta(x)-h_\theta(x')\right)
\end{aligned}
\end{equation}
Then for $\rho_{\theta,x}-\rho_{\theta,x'} = \sum_{j=1}^l\exp\left((\mathbf{w}_j-\mathbf{w}_x)^Th_\theta(x)\right)-\exp\left((\mathbf{w}_j-\mathbf{w}_x)^Th_\theta(x')\right)$:
\begin{equation}
\begin{aligned}
&\left|\sum_{j=1}^l\exp\left((\mathbf{w}_j-\mathbf{w}_x)^Th_\theta(x)\right)-\exp\left((\mathbf{w}_j-\mathbf{w}_x)^Th_\theta(x')\right)\right|\\
&\leq \sum_{j=1}^l \beta_{1,j} \left|(\mathbf{w}_j-\mathbf{w}_x)^T\left(h_\theta(x)-h_\theta(x')\right)\right| \\
&\leq \beta^* \sum_{j=1}^l\left|(\mathbf{w}_j-\mathbf{w}_x)^T\left(h_\theta(x)-h_\theta(x')\right) \right|.
\end{aligned}
\end{equation}
Now for the first side, it is rather easier by directly applying Lagrange mean value theorem for $\ln x $ on the interval $\left[\min\{\rho_{\theta,x},\rho_{\theta,x'}\},\max\{\rho_{\theta,x},\rho_{\theta,x'}\}\right]$, then we have:
\begin{equation}
\begin{aligned}
|\ln \rho_{\theta,x}-\ln \rho_{\theta,x'}| \leq \frac{\left|\rho_{\theta,x} - \rho_{\theta,x'}\right|}{\alpha^*} = \left|\sum_{j=1}^l O\left((\mathbf{w}_j-\mathbf{w}_x)^T\left(h_\theta(x)-h_\theta(x')\right)\right) \right|.
\end{aligned}
\end{equation}
To prove the other side, we apply Lagrange mean value theorem for $e^x$ on the interval $\left[\min\{\ln \rho_{\theta,x},\ln \rho_{\theta,x'}\}, \max\{\ln \rho_{\theta,x},\ln \rho_{\theta,x'}\}\right]$, we have:
\begin{equation} \label{equation:lagrangeex}
\left|\rho_{\theta,x}-\rho_{\theta,x'}\right|=\left|e^{\ln \rho_{\theta,x}}-e^{\ln \rho_{\theta,x'}} \right|\leq \max \{\beta^*,1\} \left|\ln \rho_{\theta,x}-\ln \rho_{\theta,x'}\right|,
\end{equation}
then from \ref{equation:lagrangeex} we have 
\begin{equation}
    |\ln \rho_{\theta,x}- \ln \rho_{\theta,x'}| \geq \frac{\left|\rho_{\theta,x}-\rho_{\theta,x'}\right|}{\max \{\beta^*,1\}}\geq \frac{\rho_{\theta,x}-\rho_{\theta,x'}}{\max\{\beta^*,1\}},
\end{equation}
with the following taylor expansion is enough to finish the proof:
\begin{equation}
\begin{aligned}
&\exp\left((\mathbf{w}_j-\mathbf{w}_x)^Th_\theta(x)\right)-\exp\left((\mathbf{w}_j-\mathbf{w}_x)^Th_\theta(x')\right) \\
&= \exp\left((\mathbf{w}_j-\mathbf{w}_x)^Th_\theta(x')\right)\sum_{n=1}^\infty \frac{\left((\mathbf{w}_j-\mathbf{w}_x)^T\left(h_\theta(x)-h_\theta(x')\right)\right)^n}{n!}\\
&\geq \alpha^* O\left((\mathbf{w}_j-\mathbf{w}_x)^T\left(h_\theta(x)-h_\theta(x')\right)\right)
\end{aligned}
\end{equation}
\end{proof}

\section{Pseudo Code for Training Strategies}
\label{AppendixPseudoCode}
In the following algorithms, we denote the parameter of the model with $\phi$, the learning rate of the optimizer with $\eta$, minibatch of augmented samples with $\mathcal{D}_{\text{minibatch}}$, conditional distribution induced by the involved data augmentation with $p(x'|x)$, size of minibatch with $m$, size of training samples with $N$, flag value in training process with $t$, schedule of learning rate with $f_{\text{lr}}(\eta,t)$,  maximum steps of iteration $M_{\text{epoch}}$. In our experiment, we set $\lambda=0.5$ (We performed an ablation experiment on $\lambda$, please see Appendix \ref{choose_lambda}).

\begin{algorithm}[H]
    \caption{Standard Training Strategy}
    \label{algorithm: standardAlg}
    \SetKwData{Left}{left}\SetKwData{This}{this}\SetKwData{Up}{up}
    \SetKwFunction{Union}{Union}\SetKwFunction{FindCompress}{FindCompress}
    \SetKw{Init}{Init:}
    \Init{$\phi_0,\eta_0$, $\mathcal{D}_{\text{aug}} = \{\varnothing\}$, $\mathcal{D}_{\text{minibatch}} = \{\varnothing\}$, $t = 0$}
    \SetKwData{Left}{left}\SetKwData{This}{this}\SetKwData{Up}{up}
    \SetKwFunction{Union}{Union}\SetKwFunction{FindCompress}{FindCompress}
    \SetKwInOut{Input}{Input}\SetKwInOut{Output}{output}
    \Input{$\mathcal{D}_{\text{tr}}=\{(x_i,y_i)\}_{i=1}^N$, $f_{\text{lr}}(\eta,t)$, $m$, $N$,  $M_{\text{epoch}}$ }
    \For{epoch in $M_{\text{epoch}}$:}
        {\For{$\mathcal{D}_{\text{minibatch}} = \{(x_j,y_j)\}_{j=1}^m$ in $\mathcal{D}_{\text{tr}}$}
            { \For{$x_j$ in $\mathcal{D}_{\text{minibatch}}$}
                {
                    sample $x_j' \sim p(x'|x_j)$
                }
                $\mathcal{D}_{\text{aug}} = \{(x_j',y_j)\}_{j=1}^m$\\
             $\text{Loss}=-\frac{1}{N} \sum_{j=1}^{N}y_j \log q_\phi(y|x_j')$\\
             $\phi_{t+1}=\phi_{t}+\eta_t \cdot \nabla_{\phi}\text{Loss}$\\
            update learning rate $\eta_{t+1} = f_{\text{lr}}(\eta_t,t)$\\
            $t = t+1$
            }
        }
    
\end{algorithm}

\begin{algorithm}[H]
    \caption{Our Training Strategy}
    \label{algorithm: OurAlg}
    \SetKwData{Left}{left}\SetKwData{This}{this}\SetKwData{Up}{up}
    \SetKwFunction{Union}{Union}\SetKwFunction{FindCompress}{FindCompress}
    \SetKw{Init}{Init:}
    \Init{$\phi_0,\eta_0$, $\mathcal{D}_{\text{aug}} = \{\varnothing\}$, $\mathcal{D}_{\text{minibatch}} = \{\varnothing\}$, $t = 0$}
    \SetKwData{Left}{left}\SetKwData{This}{this}\SetKwData{Up}{up}
    \SetKwFunction{Union}{Union}\SetKwFunction{FindCompress}{FindCompress}
    \SetKwInOut{Input}{Input}\SetKwInOut{Output}{output}
    \Input{$\mathcal{D}_{\text{tr}}=\{(x_i,y_i)\}_{i=1}^N$, $f_{\text{lr}}(\eta,t)$, $m$, $N$,  $M_{\text{epoch}}$ }
    \For{epoch in $M_{\text{epoch}}$:}
        {\For{$\mathcal{D}_{\text{minibatch}} = \{(x_j,y_j)\}_{j=1}^m$ in $\mathcal{D}_{\text{tr}}$}
            { \For{$x_j$ in $\mathcal{D}_{\text{minibatch}}$}
                {
                    sample $x_j' \sim p(x'|x_j)$
                }
                $\mathcal{D}_{\text{aug}} = \{(x_j',y_j)\}_{j=1}^m$\\
            $\text{Loss}=-\frac{1}{N}\sum_{j=1}^{N} y_j[\log q_\phi(y|x_j)+\lambda(\log\frac{q_\phi(y|x_j)}{q_\phi(y|x_j')})]$\\
            $\phi_{t+1}=\phi_{t}+\eta_t \cdot\nabla_{\phi}\text{Loss}$\\
            update learning rate $\eta_{t+1} = f_{\text{lr}}(\eta_t,t)$\\
            $t = t+1$
        }
    }
\end{algorithm}

\section{Further discussion with other existing work}

\subsection{Comparsion with \cite{chen2020group}}\label{sec:diff_group}

The proposed mathematical framework in this paper differs from \cite{chen2020group} in three main aspects.

Firstly, \cite{chen2020group} defines the set of augmentations as a group $G$, and defines the augmentation operator on dataspace $X$ as group actions $G\times X\rightarrow X$. The scope of application of this definition is relatively narrow. In our work, the operator is defined as a measurable mapping $A(\theta, x)$. The set of operators is denoted as $\{A_i(\theta_i,\cdot)|\theta_i\in \Theta_i\}$ where $\Theta_i$ is the parameter space for data augmentation operator $A_i$, e.g., in the case of rotation, it can be $[0,2\pi]^n$. To support the claim we made, we give the following continuous differentiable color transformation operator (It was used in \cite{tian2021continuous}) as an example that does not meet the definition of group action. Color Adjustment is defined as a transformation in the spatial domain that is equally applied to each pixel. Let $n$ be the image size. For every coordinate $(x, y)$ with pixel vector $I_{x y}=\left[\begin{array}{lll}h_{x y} & \mathbf{s}_{x y} & \mathbf{v}_{x y}\end{array}\right]^{\top}$. The augmentation is then defined as:
$$
I_{x y}^{\prime}=\left[\begin{array}{c}
\mathrm{h}_{x y}^{\prime} \\
\mathrm{s}_{x y}^{\prime} \\
\mathrm{v}_{x y}^{\prime}
\end{array}\right]=\left[\begin{array}{l}
\alpha_{\mathrm{h}}+\left(1+\beta_{\mathrm{h}}\right)\left(\mathrm{h}_{x y}\right)^{\gamma_{\mathrm{h}}} \\
\alpha_{\mathrm{s}}+\left(1+\beta_{\mathrm{s}}\right)\left(\mathrm{s}_{x y}\right)^{\gamma_{\mathrm{s}}} \\
\alpha_{\mathrm{v}}+\left(1+\beta_{\mathrm{v}}\right)\left(\mathrm{v}_{x y}\right)^{\gamma_{\mathrm{v}}}
\end{array}\right]=(g,I_{xy}),
$$

Clearly, there does not exist a $g_1$ such that:
$$
(g_1,I_{xy})=\left[\begin{array}{l}
\alpha_{\mathrm{h}}^{\prime\prime} +\left(1+\beta_{\mathrm{h}}^{\prime\prime} \right)\left(\mathrm{h}_{x y}\right)^{\gamma_{\mathrm{h}}^{\prime\prime} } \\
\alpha_{\mathrm{s}}^{\prime\prime} +\left(1+\beta_{\mathrm{s}}^{\prime\prime} \right)\left(\mathrm{s}_{x y}\right)^{\gamma_{\mathrm{s}}^{\prime\prime} } \\
\alpha_{\mathrm{v}}^{\prime\prime} +\left(1+\beta_{\mathrm{v}}^{\prime\prime} \right)\left(\mathrm{v}_{x y}\right)^{\gamma_{\mathrm{v}}^{\prime\prime} }
\end{array}\right]=\left(g^{\prime},(g,I_{xy})\right).
$$
This means that such $g$ defined by $\boldsymbol{\alpha}_{\mathrm{CA}}=\left[\begin{array}{lllllllll}
\alpha_{\mathrm{h}} & \beta_{\mathrm{h}} & \gamma_{\mathrm{h}} & \alpha_{\mathrm{s}} & \beta_{\mathrm{s}} & \gamma_{\mathrm{s}} & \alpha_{\mathrm{v}} & \beta_{\mathrm{v}} & \gamma_{\mathrm{v}}
\end{array}\right]^{\top}$ cannot be a group action in sample space $X$. In conclusion, our definition of augmentation operator can be applied to a wider range.

Secondly, the proposed framework in this paper pays more attention to transforming the probability measure on the parameter space $\Theta$ to the probability measure $p(x'|x)$ on the sample space $X$, while this hasn't been considered in work \cite{chen2020group}. In this work, we do not directly define the data distribution of the augmented data on the group orbit. Instead, we define the augmented samples for every clean sample $x$ by considering the measurable map $A(\cdot,x)$. This is inspired by the insight that one generates augmented samples through sampling from the parameter space.

Thirdly, in the aforementioned paper, they proved that the shifted population risk and the original population risk are asymptotically approximate or non-asymptotically approximate (under different assumptions). While one of the main premises of our work is that the shifted population risk and the original population risk are not the same. There is a gap between them, and this gap is the key to the  generalization of the model as it is shown in the ablation experiment in Appendix \ref{choose_lambda}, indicating that adding coefficient $\lambda$ to weaken its impact on major features helps improving the generalization ability. This gap, however, is not paid attention to in \cite{chen2020group}.

\subsection{Comparsion with \cite{16he2019data}}\label{sec:diff_he}
Our work differs from \cite{16he2019data} in the following aspects:

Firstly, although we mention similar tools like VC dimension and information gains in our analysis, our main focus is on the decomposition of the loss function. We specifically investigate the impact of the consistency regularization term and prove through a theorem that it can affect the learning of major features. Equation \eqref{eq:key_equality} demonstrates that this impact can worsen the generalization performance of the model, as $\hat{R}_{f_{\mathrm{aug}}}(p)$ is part of the upper bound of the generalization error, and the consistency regularization term can lead to different weights for some major features as it is indicated by \eqref{theorem: OrderofSecondterm}, particularly in the early training stage. Our proposed $\lambda$ helps to reduce such inconsistency.

Secondly, \cite{16he2019data} does not realize the effect of the consistency regularization term. They propose a method called refined data augmentation, which involves refining the models without intensive data augmentation at the end of the training stage. This approach differs from ours. Moreover, it is worth noting that the mentioned decomposition cannot occur without our proposed mathematical framework, which holds great importance.

\section{Validation Experiment of Selecting $\lambda$}
\label{choose_lambda}
The choice of $\lambda$ depends on the involved data augmentations since the value of $\|h_d(x)-h_d(x')\|$ depends on them, the $\lambda$ should not be too small since a small $\lambda$ directly ignores the second term and the training strategy degenerate to the empirical original population risk.
An empirical choice of $\lambda$ is 0.5, which we get from a series of validation experiments conducted on CIFAR100 with Resnet-18. We have conducted three independent experiments and calculated the mean values as the results, which are shown in the following Table \ref{lambda_result}.

\begin{table}[htbp]
  \centering
  \caption{Error rates (\%) of ResNet-18 on the test set of CIFAR100 with different $\lambda$.}
    \resizebox{\linewidth}{!}{\begin{tabular}{cccccc}
    \toprule
    Standard (1bbs) & $\lambda=0.0001$ & $\lambda=0.1$   & $\lambda=0.2$   & $\lambda=0.3$   & $\lambda=0.4$ \\
    \midrule
      $21.21\pm0.18$  & $35.05\pm0.10$* & $22.84\pm0.07$* & $21.39\pm0.22$* & $20.85\pm	0.16$ & $20.71\pm0.40$ \\
    \midrule
      & $\lambda=0.5$   & $\lambda=0.6$   & $\lambda=0.7$   & $\lambda=0.8$   & $\lambda=0.9$   \\
    \midrule
    &\textbf{$20.22\pm0.05$} & $20.59\pm0.12$ & $20.52\pm0.26$ & $20.48\pm0.34$ & $20.86\pm0.11$  \\
    \bottomrule
    \multicolumn{6}{l}{*Results that are below the standard performance.}
    \end{tabular}}
  \label{lambda_result}
\end{table}
In this table, we see that a small $\lambda$ $(\lambda = 0.0001, 0.1, 0.2)$ leads to a worse performance than the standard method for the occurrence of degeneration. And when $\lambda = 0.5$, we achieve the best result among these $\lambda$.

\section{Experiment Implementation}
\subsection{Datasets Splitting Details}
\label{DSD}
CIFAR-10/100 consist of 60,000 32$\times$32 color images in 10 and 100 classes, respectively. Food-101 consist of 101,000 color images in 101 classes. ImageNet (ILSVRC2012) \cite{russakovsky2015imagenet} consists of approximately 1.33 million $224\times224$ images in 1000 classes. For each of them, a validation set is split from the training set to find networks with the best performances, which is shown in Table \ref{split}.

\begin{table}[htbp]
  \centering
  \caption{Dataset Partitioning}
  \resizebox{\textwidth}{!}{
    \begin{tabular}{cccccc}
    \toprule
    Dataset & Train Dataset & Test Dataset & Valid Dataset & Classes & Evaluation Criterion \\
    \midrule
    ImageNet &  1281167  & 0 &  50000  &  1000  & Top-1 and Top-5 accuracy \\
    Food101 & 68175 & 25250 & 7575  & 101   & Top-1 accuracy \\
    CIFAR10 & 45000 & 10000 & 5000  & 10    & Top-1 accuracy \\
    CIFAR100 & 45000 & 10000 & 5000  & 100   & Top-1 accuracy \\
    \bottomrule
    \end{tabular}}
  \label{split}%
\end{table}%

As for the OOD Scenario experiments, there's no need to split datasets, since we employed the leave-one-domain-out strategy for OOD validation.

\subsection{Data augmentation}
\label{D_aug}
In the setting of data augmentation, horizontal flips with p = 0.5, random crops from image padded by 1/8 pixels of the original size on each side (4 pixels for 32$\times$32 and 28 pixels for 224$\times$224) and random rotation with degrees = 15. For ResNet on ImageNet and WideResNet, mean/std normalization is added. Since the standard ResNet is designed for ImageNet, the data are resized to 224$\times$224 when we use ResNet-18/50, which is not applied to WideResNet. All images in baseline experiments are processed with the above augmentation. However, for our strategy, we will keep both the augmented data and the data that have only been resized or normalized.

As for the OOD Scenario experiments, we use the same augmentation as Domainbed's \cite{gulrajani2020search} both in ERM algorithm and ours, which is composed of a random resized crop, a random horizontal flip, a color jitter and a random gray scale. Moreover, all images are normalized before training.

\subsection{Training Details}
\label{T_Detail}
All models are trained on a single GPU (NVIDIA RTX A6000 or NVIDIA A40 for ImageNet and NVIDIA GeForce RTX 3090 for others) using SGD with a weight decay of $5\times10^{-4}$ and a momentum of 0.9 (Nesterov momentum for WideResNet) for 200 epochs. For CIFAR-10/100 and Food101, the basic batch size (bbs) is set to 100 for ResNet and 128 for WideResNet and the basic learning rate (blr) is set to 0.1. In addition, warmup is used for 5 epochs for ResNet-18 and 10 epochs for others with a cosine learning rate schedule. For ImageNet, we use the bbs of 256 for ResNet-50 and 192 for ResNet-101. The basic learning rate (blr) starts at 0.1 and is divided by 10 after 60, 120, 130 and 180 epochs with the warmup for 10 epochs. To accelerate the speed, we adopt mixed precision training with torch.cuda.amp on ImageNet.

As for the OOD Scenario experiments, all models are trained on a single NVIDIA GeForce 3090 using AdamW with no weight decay (following \cite{gulrajani2020search}). The learning rate is set to 0.001 with a cosine annealing scheduler. Considering the experimental setup in \cite{gulrajani2020search}, we set a batchsize of 40 and a training epoch of 50 for every test domain.

As for the long-tail imbalance, all models are trained on a single NVIDIA GeForce 3090 using AdamW with the same hyperparameters as the standard scenario experiment of CIFAR-10. For example, we use the learning rate of 0.1 and the batchsize of 100 with a cosine learning rate schedule. For our training strategy, the $\lambda$ is set to 0.5. 

\subsection{Results with Error Bar}
\label{error_bar}
Due to limited space, we did not include error bars with data in the main text. Instead, they are presented here. The following Table \ref{result_with_error_bar} to Table \ref{tab:LT} are the data of our experiment results.

\begin{table}[htbp]
  \centering
  \caption{Top-1 accuracy (\%) of CIFAR10/100 and Food101 on the test set}
    \begin{tabular}{lccc}
    \toprule
    Model & CIFAR10 & CIFAR100 & Food101 \\
    \midrule
    ResNet-18 &       &       &  \\
    Standard (1$\times$ bbs) &  $95.59\pm 0.09$  & $78.79\pm0.18$ & $78.00\pm0.10$ \\
    Standard (2$\times$ bbs) & $95.48\pm 0.10$ & $78.24\pm 0.11$ & $77.20\pm 0.18$ \\
    Ours (1$\times$ bbs) & \textbf{$96.22\pm0.07$} & \textbf{$79.78\pm0.05$} & \textbf{$78.26\pm0.03$} \\
    \midrule
    ResNet-50 &       &       &  \\
    Standard (1$\times$ bbs) & $95.53\pm0.11$ & $80.34\pm0.14$ & $82.55\pm0.19$ \\
    Standard (2$\times$ bbs) & $95.27\pm0.07$	& $80.16\pm0.25$ & $82.29\pm0.02$ \\
    Ours (1$\times$ bbs) & \textbf{$96.06\pm0.04$} & \textbf{$81.90\pm0.17$} & \textbf{$83.46\pm0.09$} \\
    \midrule
    WideResNet-40-2 &       &       &  \\
    Standard (1$\times$ bbs) & $94.78\pm0.08$	& $74.23\pm0.05$ & - \\
    Standard (2$\times$ bbs) & $94.55\pm0.06$	& $74.11\pm0.32$ & - \\
    Ours (1$\times$ bbs) & \textbf{$95.19\pm0.17$} & \textbf{$76.13\pm0.24$} & - \\
    \midrule
    WideResNet-28-10 &       &       &  \\
    Standard (1$\times$ bbs) & $95.91\pm0.11$	& $79.39\pm0.13$ & - \\
    Standard (2$\times$ bbs) & $95.70\pm0.04$	& $78.76\pm0.23$ & - \\
    Ours (1$\times$ bbs) & \textbf{$96.17\pm0.07$} & \textbf{$80.38\pm0.29$} & - \\
    \bottomrule
    \end{tabular}%
  \label{result_with_error_bar}%
\end{table}

\begin{table}[htbp]
  \centering
  \caption{Top-1 accuracy (\%) of ImageNet (ILSVRC2012) on the test set. Due to the limited computation resources, we are forced to adopt mixed precision training with torch.cuda.amp.}
    \begin{tabular}{lcc}
    \toprule
    Model & \multicolumn{1}{l}{Top-1} & \multicolumn{1}{l}{Top-5}  \\
    \midrule
    ResNet-50 &  &  \\
    Standard (1$\times$bbs) & 74.23 & 91.81\\
    Standard (2$\times$bbs) & 73.98 & 91.80\\
    Ours ($1\times$bbs) & 74.74 & 92.51\\
    \midrule
    ResNet-101 &  \\
    Standard (1$\times$bbs) & 75.72 & 92.90\\
    Standard (2$\times$bbs) & 75.43  & 92.82\\
    Ours (1$\times$bbs) & 75.92  & 92.93\\
    \bottomrule
    \end{tabular}%
  \label{imagenet_result}%
\end{table}%

\begin{table}[htbp]
  \centering
  \caption{Top-1 accuracy (\%) of PACS on the test domain}
    \begin{tabular}{ccccccc}
    \toprule
    \multicolumn{2}{c}{Method \textbackslash Test Domain} & Art Paint & Cartoon & Photo & Sketch & (Average) \\
    \midrule
    \multirow{2}[1]{*}{ResNet-18} & ERM   & $72.46\pm1.36$ & $73.52\pm0.57$ & $87.87\pm0.61$ & $72.43\pm0.75$ & $76.57\pm0.53$ \\
                                  & Ours  & $72.91\pm0.97$ & $80.00\pm0.50$ & $87.80\pm0.36$ & $80.75\pm0.18$ & $79.86\pm0.48$ \\
    \midrule
    \multirow{2}[1]{*}{ResNet-50} & ERM   & $75.39\pm0.27$ & $78.89\pm1.17$ & $92.05\pm0.59$ & $76.80\pm0.97$ & $80.78\pm0.31$ \\
                                  & Ours  & $77.26\pm0.16$ & $82.96\pm1.02$ & $92.25\pm0.29$ & $82.23\pm1.10$ & $83.67\pm0.10$ \\
    \bottomrule
    \end{tabular}%
  \label{tab:pacs}%
\end{table}%

\begin{table}[htbp]
  \centering
  \caption{Results of the long-tailed scenario experiment.}
    \begin{tabular}{ccllll}
    \toprule
    \multicolumn{2}{p{10em}}{Dataset Name} & \multicolumn{4}{c}{Long-Tailed CIFAR10} \\
    \midrule
    \multicolumn{2}{p{10em}}{Imbalance} & \multicolumn{1}{c}{100} & \multicolumn{1}{c}{50} & \multicolumn{1}{c}{20} & \multicolumn{1}{c}{10} \\
    \midrule
    \multirow{3}[0]{*}{Baseline} & ACC   & $74.91\pm 0.48$ & $81.55\pm 0.08$ & $87.56\pm 0.25$ & $90.47\pm 0.09$ \\
          & AUC   & $96.48\pm 0.03$ & $97.73\pm 0.01$ & $98.79\pm 0.02$ & $99.22\pm 0.00$ \\
          & AP    & $82.96\pm 0.18$ & $88.57\pm 0.13$ & $93.69\pm 0.12$ & $95.67\pm 0.04$ \\
    \midrule
    \multirow{3}[0]{*}{Ours} & ACC   & $75.23\pm 1.76$ & $82.37\pm 0.65$ & $88.11\pm 0.23$ & $90.94\pm 0.09$ \\
          & AUC   & $96.69\pm 0.12$ & $97.97\pm 0.11$ & $98.95\pm 0.01$ & $99.36\pm 0.00$ \\
          & AP    & $83.27\pm 0.98$ & $89.47\pm 0.28$ & $94.43\pm 0.10$ & $96.41\pm 0.03$ \\
    \bottomrule
    \end{tabular}%
  \label{tab:LT}%
\end{table}%

\subsection{Accuracy Curves}\label{appendix:accuracycurve}

\begin{figure}[h]
    \centering
    \subfloat[Resnet-50]
    {
        \label{Figure 3}
        \includegraphics[width=0.42\linewidth]{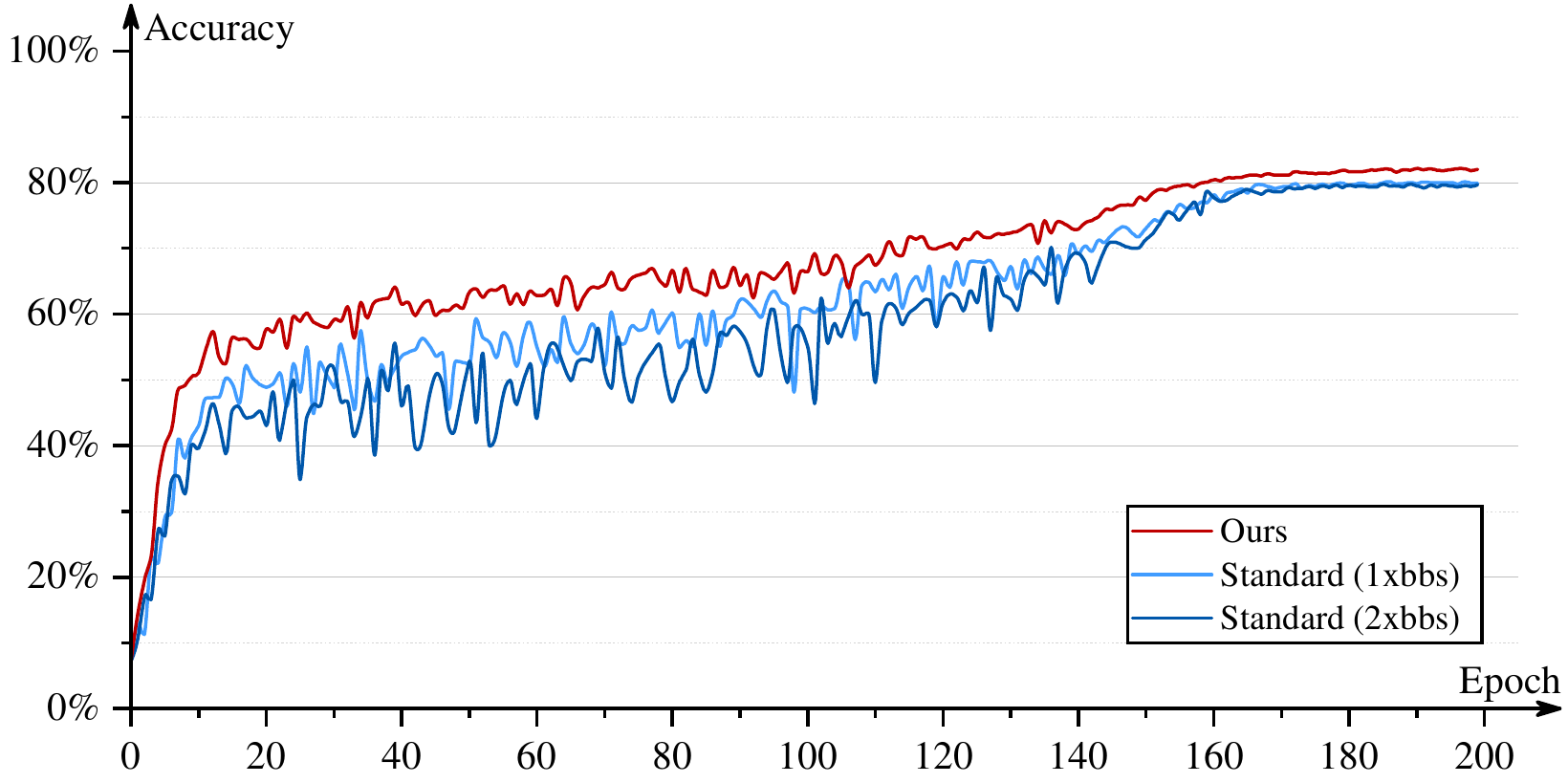}
    }
    \subfloat[WRN-40-2]
    {
        \label{Figure 4}
        \includegraphics[width=0.42\linewidth]{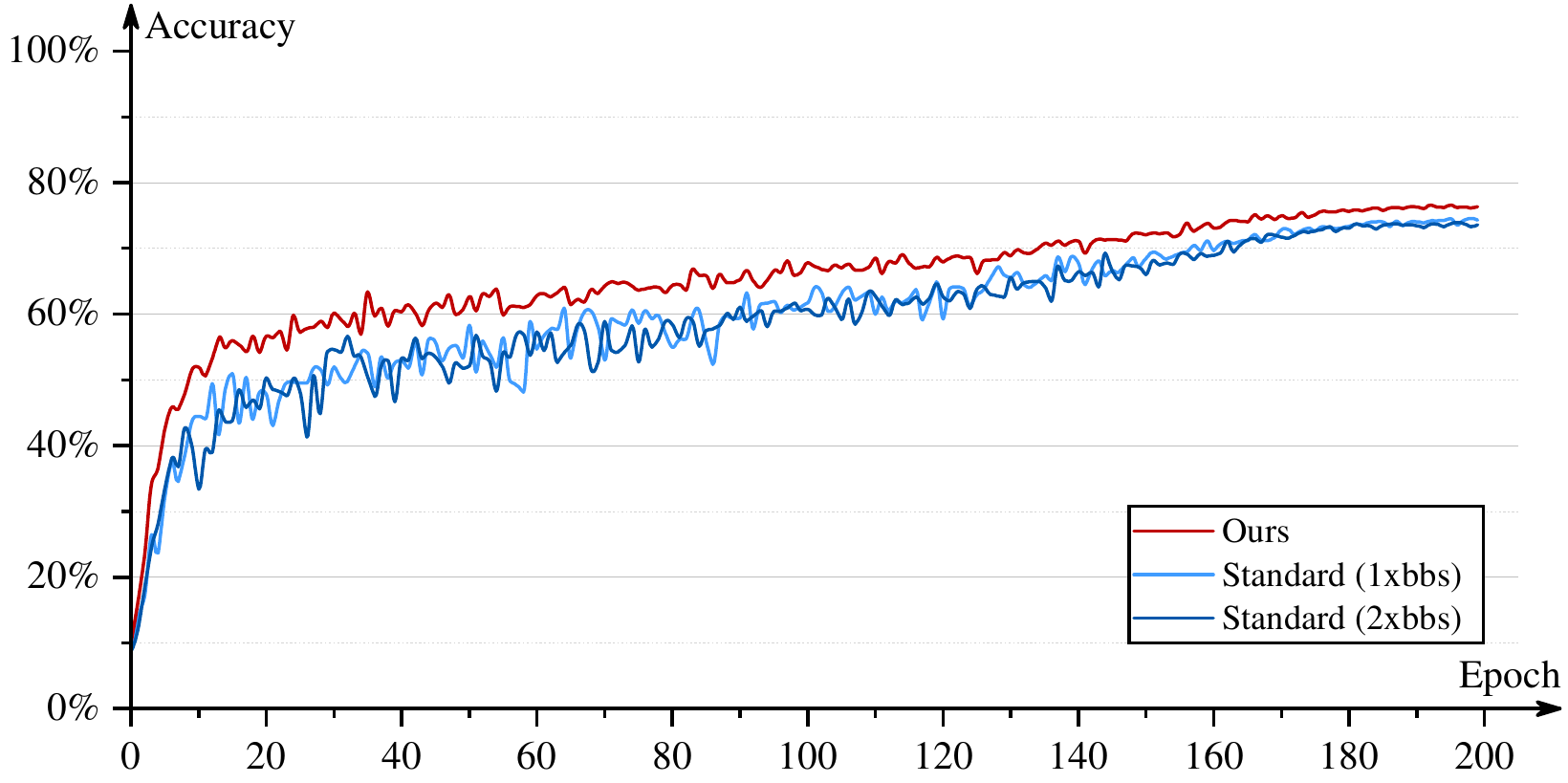}
    } 
    \caption{Top-1 accuracy  curve of Resnet-50 and WideResNet-40-2 training on CIFAR-100. The left is Resnet-50 and the right is WideResNet-40-2.}
    \label{Fcc}
\end{figure}

We randomly selected one set from the three experiments for plotting, which are shown in Figure \ref{xunliantuxiang}.

\begin{figure}[htbp]
    \centering
    \subfloat[ResNet-18 CIFAR10]
    {
        \includegraphics[width=0.42\linewidth]{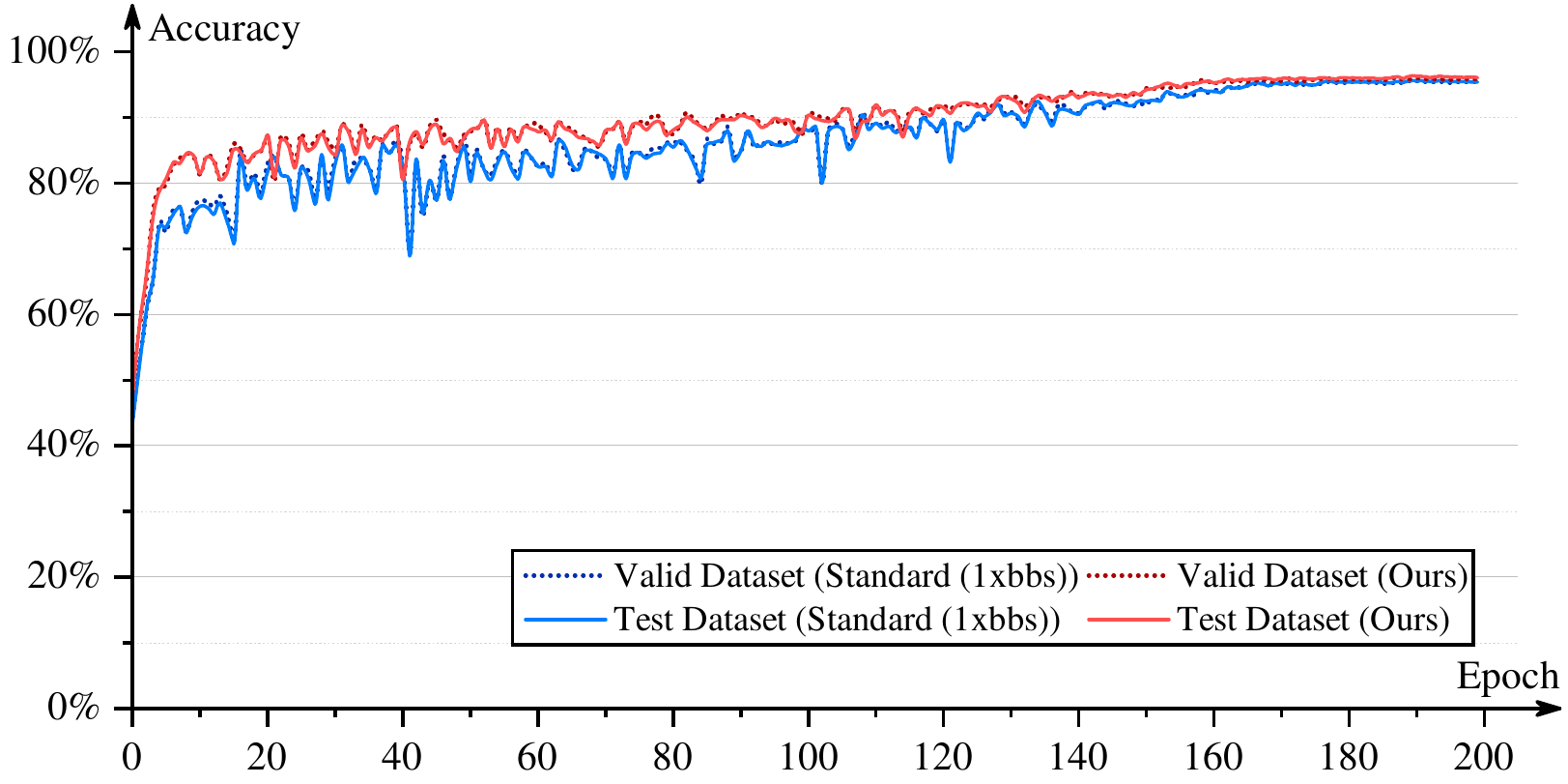}
    }
    \subfloat[ResNet-18 CIFAR100]
    {
        \includegraphics[width=0.42\linewidth]{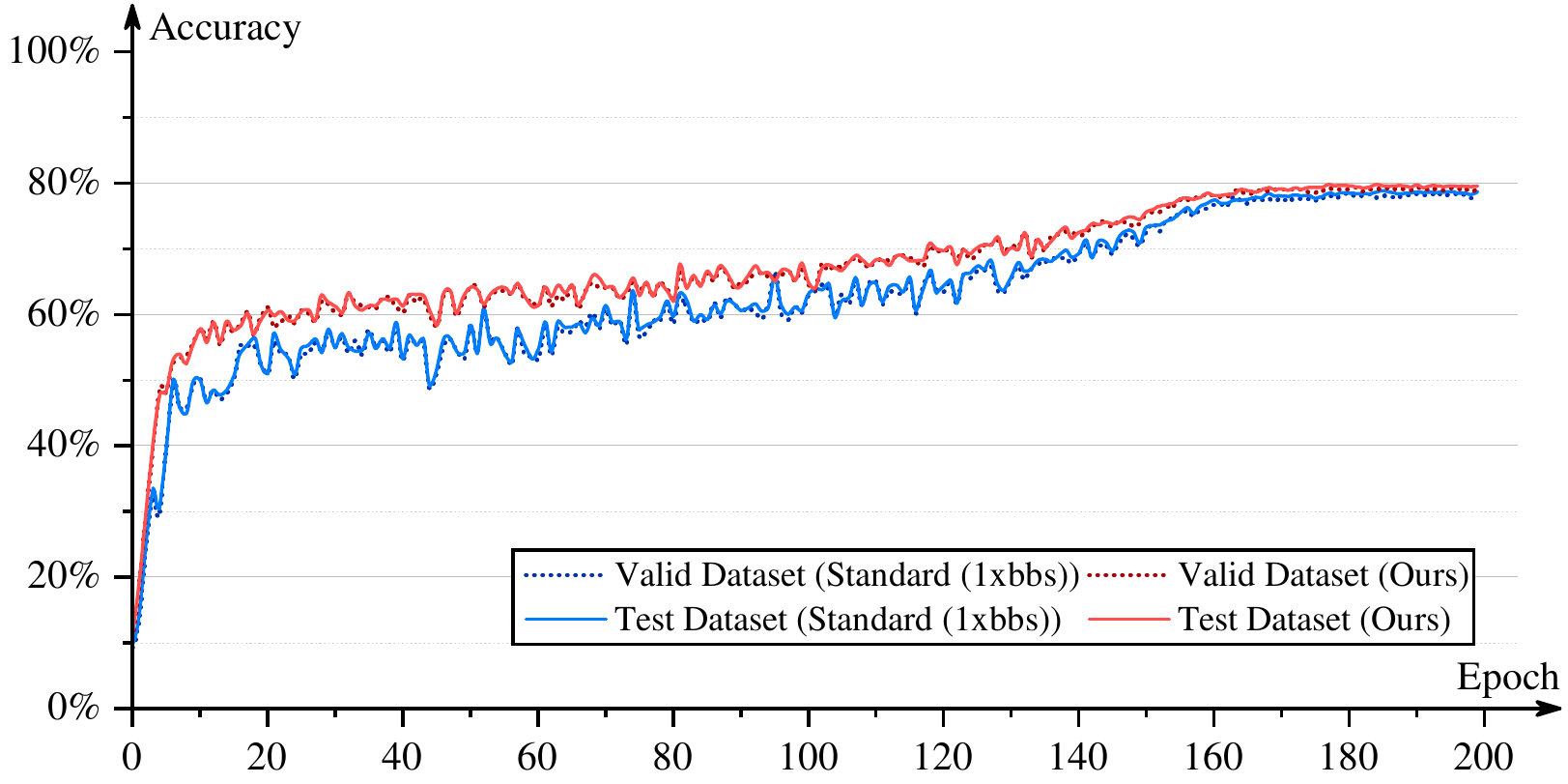}
    }

    \subfloat[ResNet-18 Food101]
    {
        \includegraphics[width=0.42\linewidth]{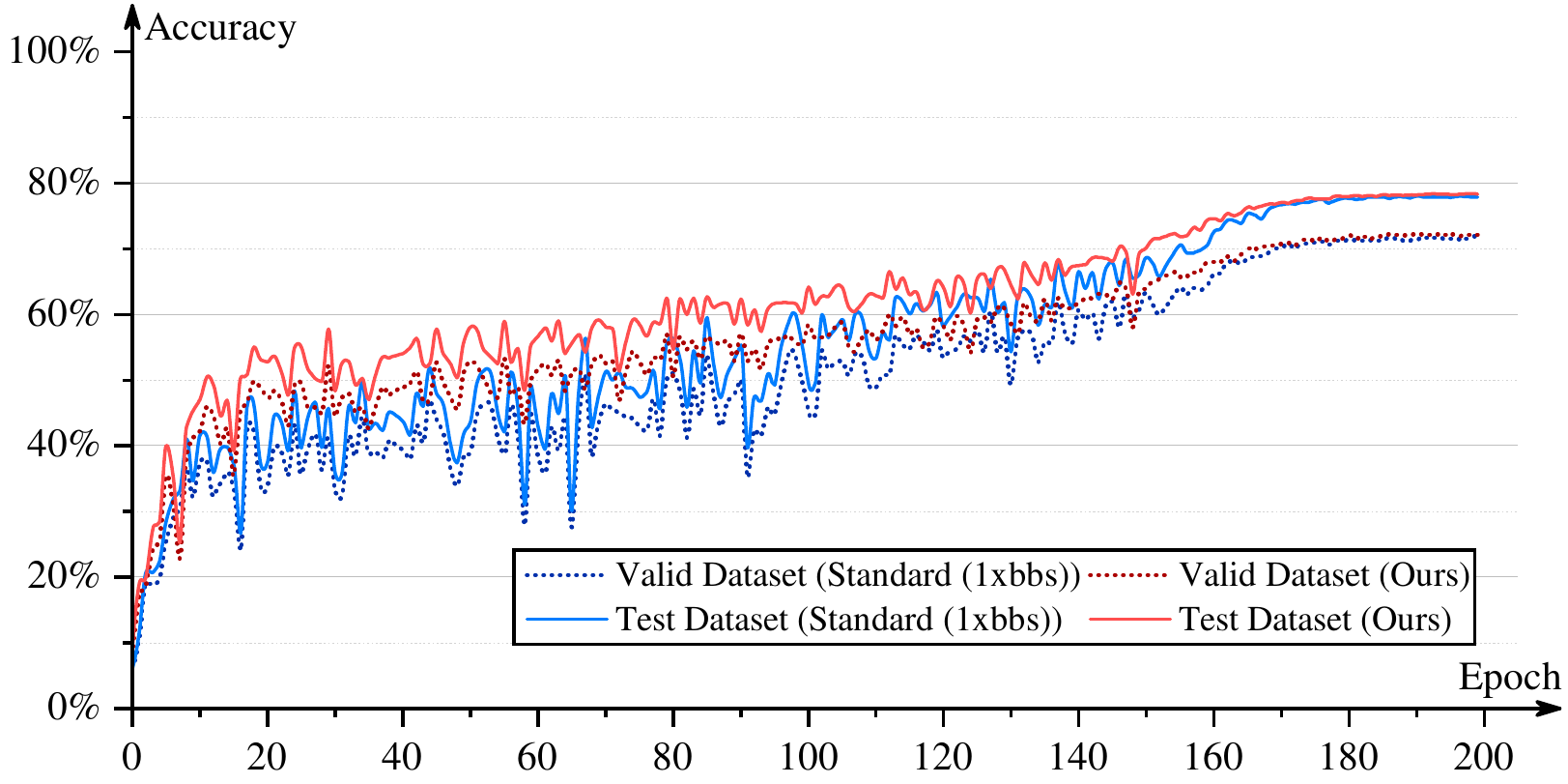}
    }
    \subfloat[ResNet-50 CIFAR10]
    {
        \includegraphics[width=0.42\linewidth]{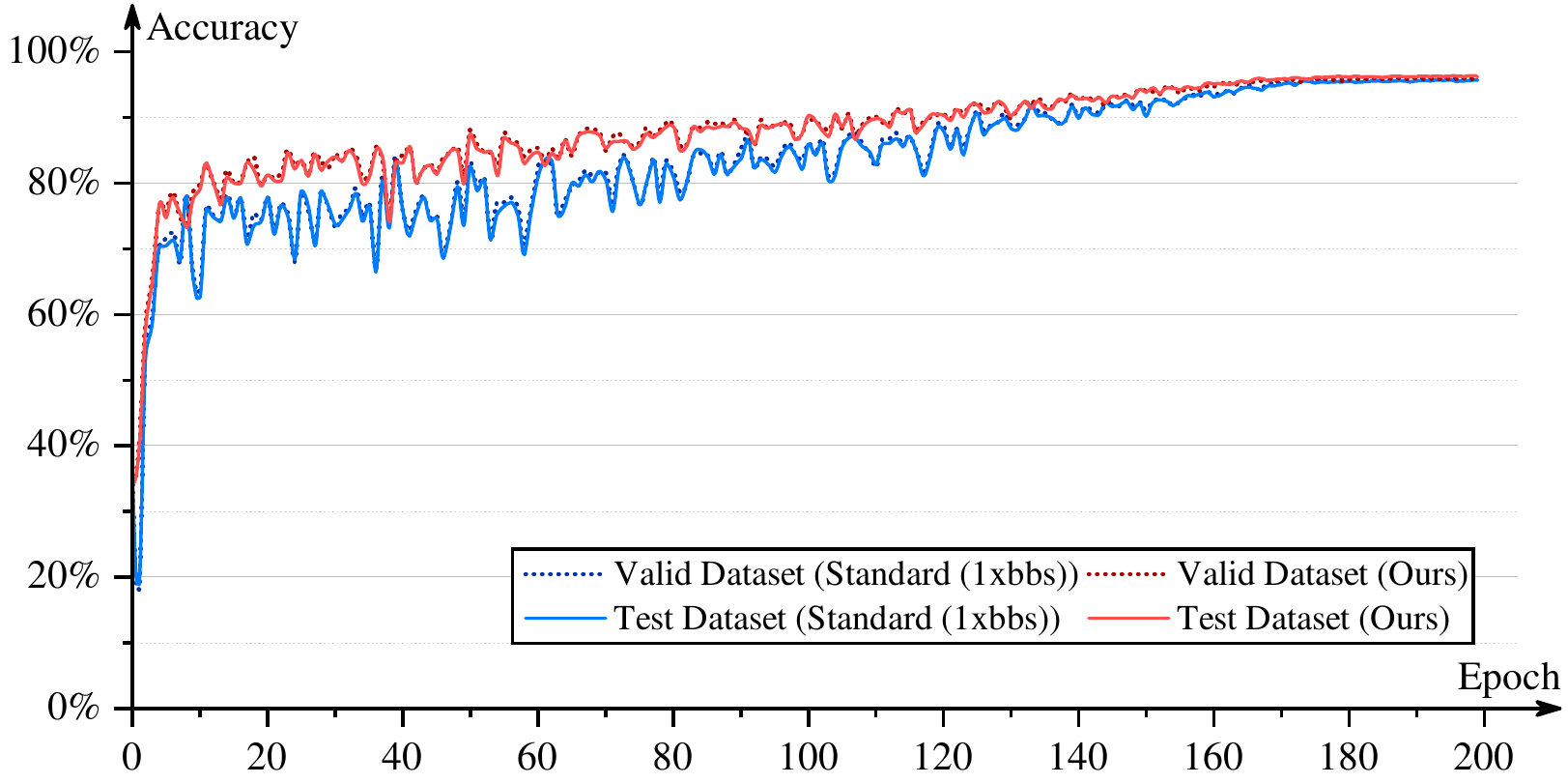}
    }
    \caption{Accuracy curves}
\end{figure}

\begin{figure}[htbp]
    \ContinuedFloat
    \centering
    \subfloat[ResNet-50 CIFAR100]
    {
        \includegraphics[width=0.42\linewidth]{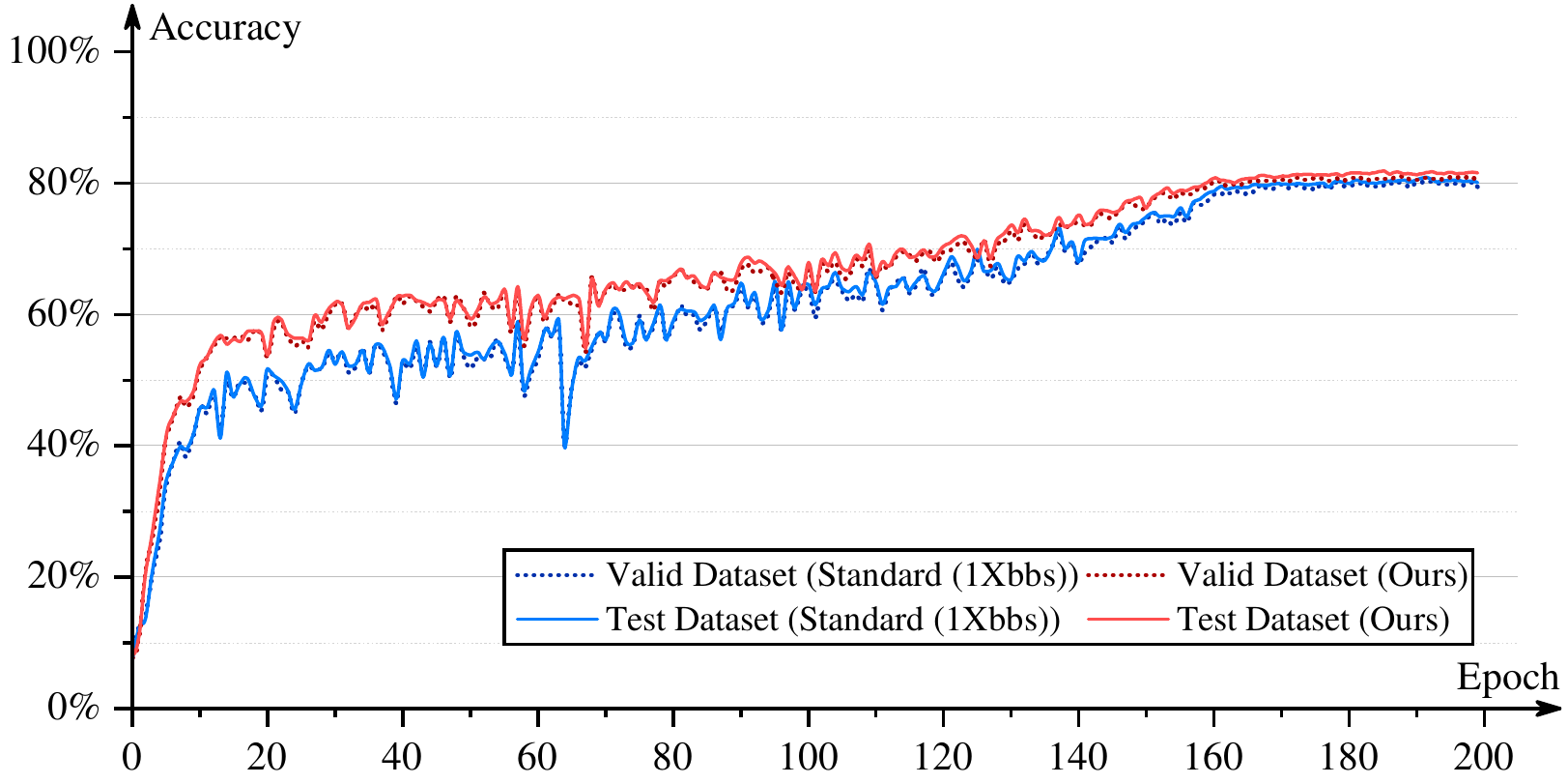}
    }
    \subfloat[ResNet-50 Food101]
    {
        \includegraphics[width=0.42\linewidth]{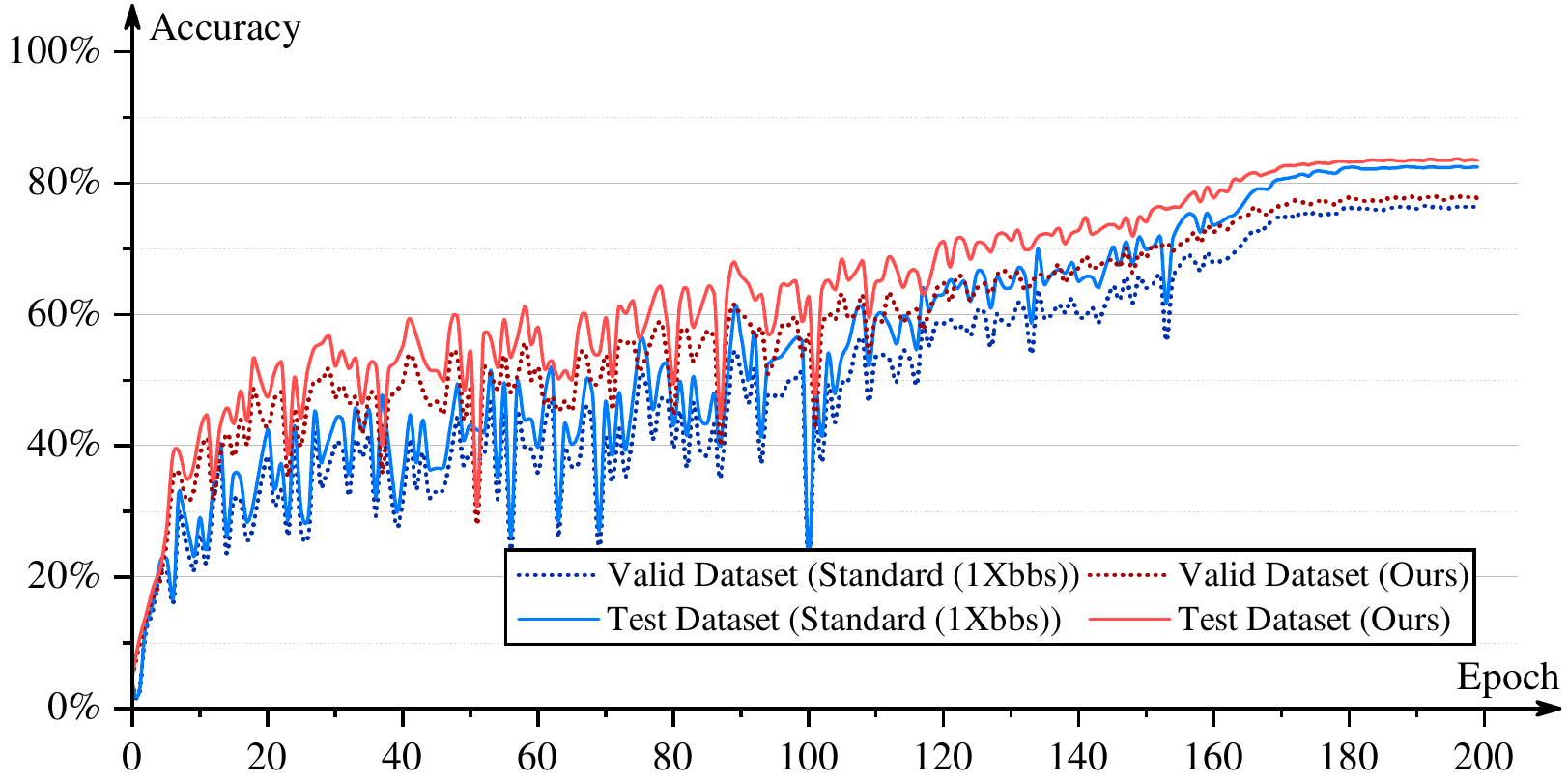}
    }
    
    \subfloat[WideResNet-40-2 CIFAR10]
    {
        \includegraphics[width=0.42\linewidth]{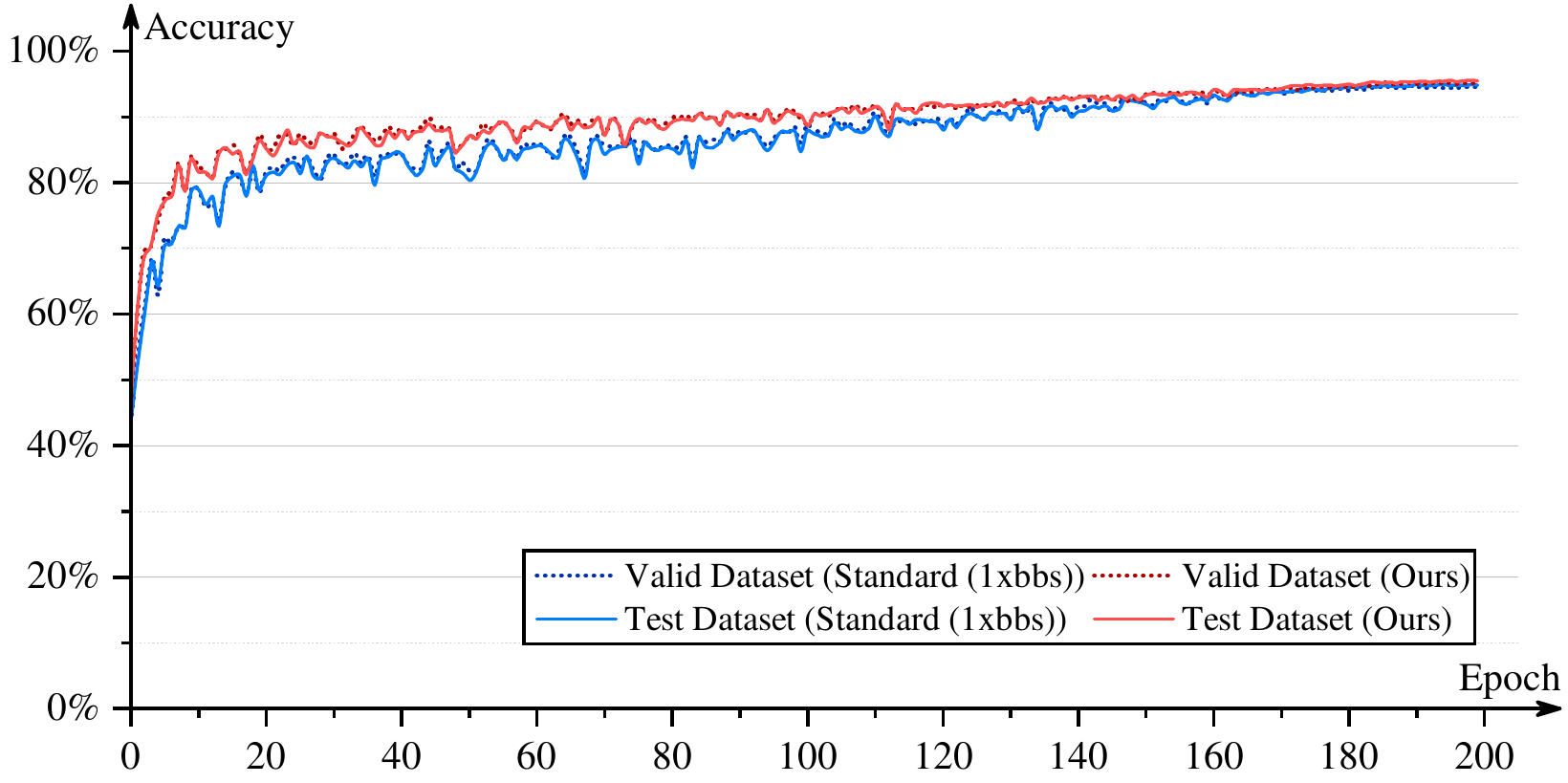}
    }
    \subfloat[WideResNet-40-2 CIFAR100]
    {
        \includegraphics[width=0.42\linewidth]{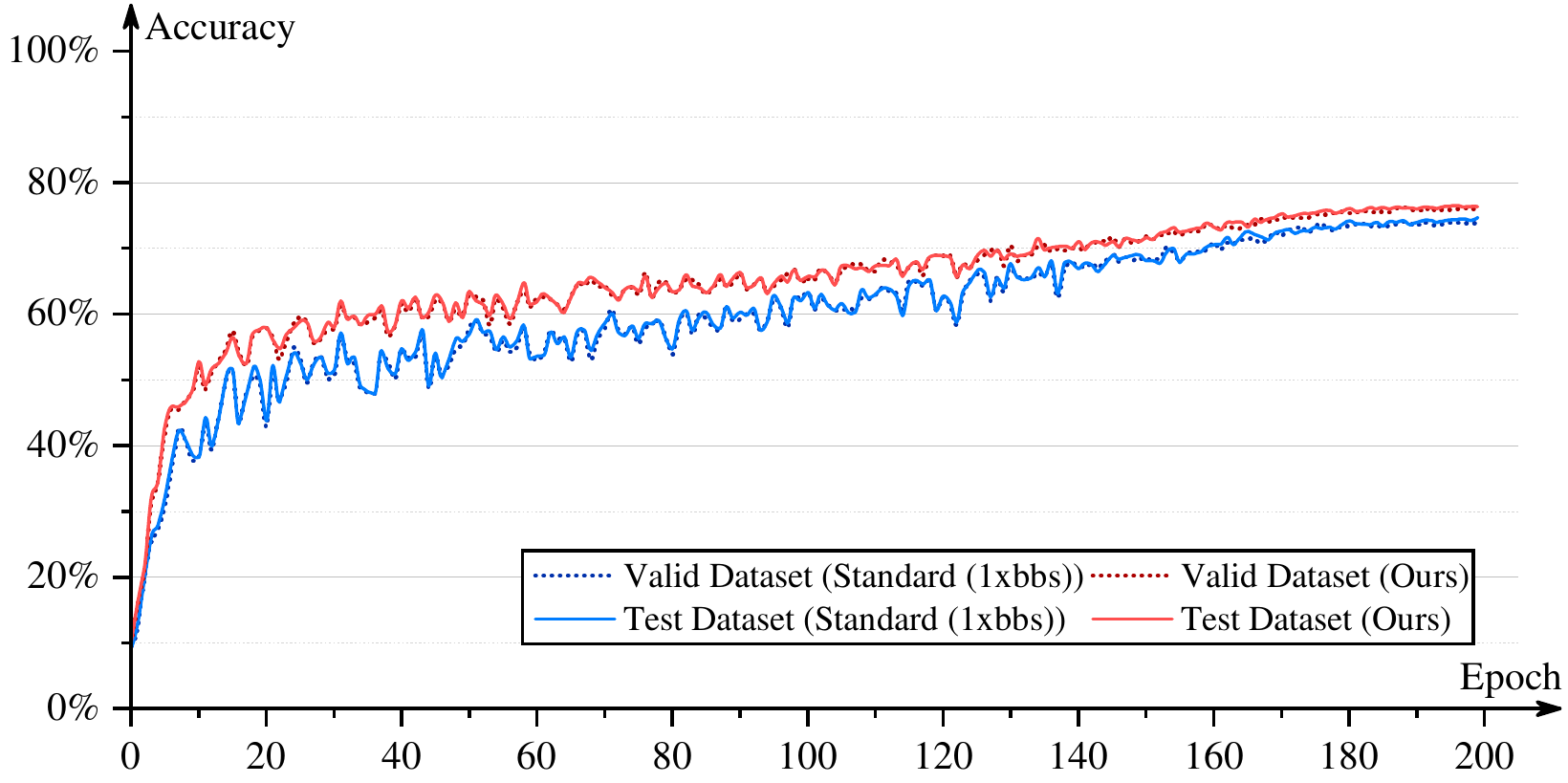}
    }
    
    \subfloat[WideResNet-28-10 CIFAR10]
    {
        \includegraphics[width=0.42\linewidth]{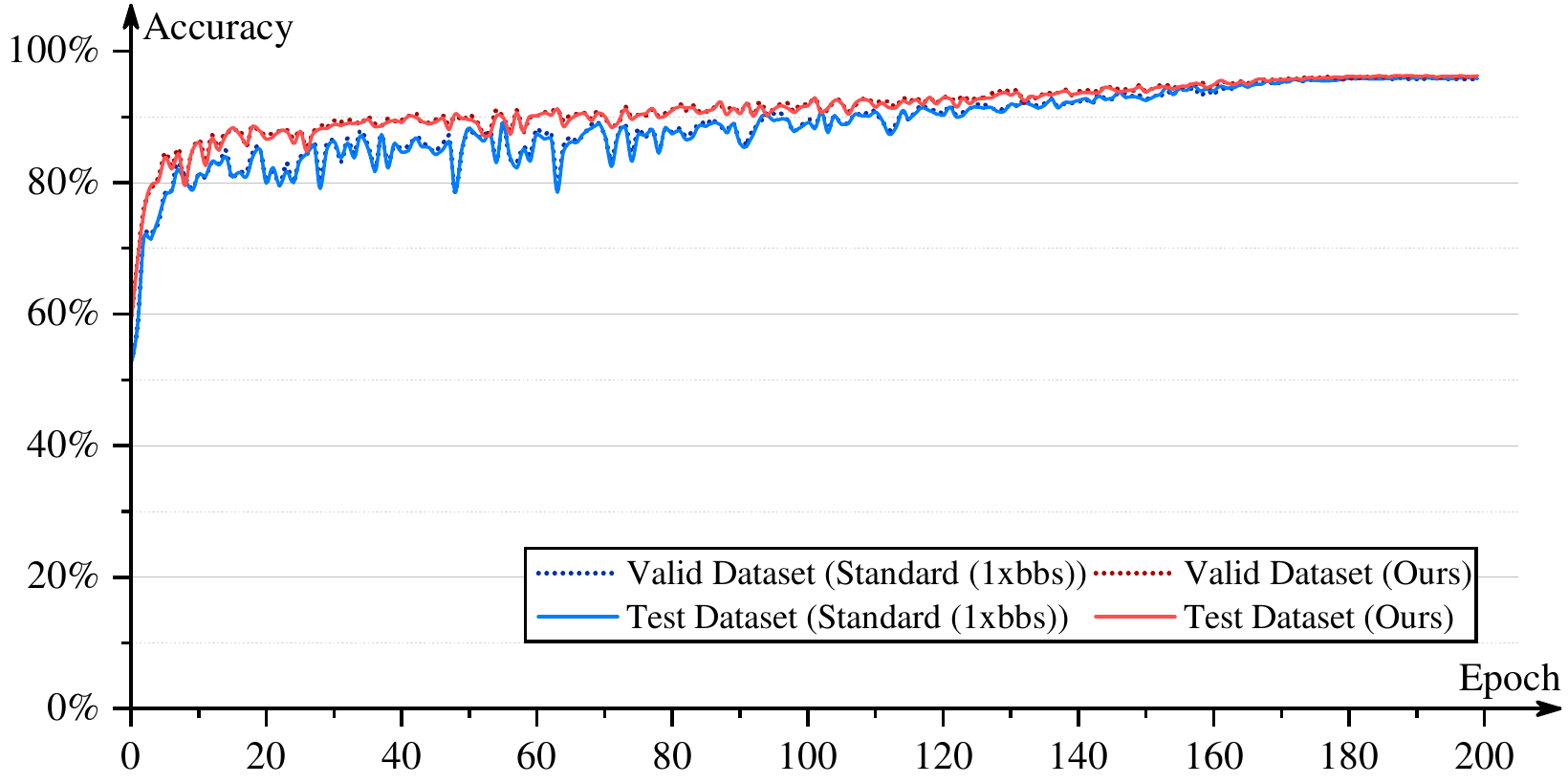}
    }
    \subfloat[WideResNet-28-10 CIFAR100]
    {
        \includegraphics[width=0.42\linewidth]{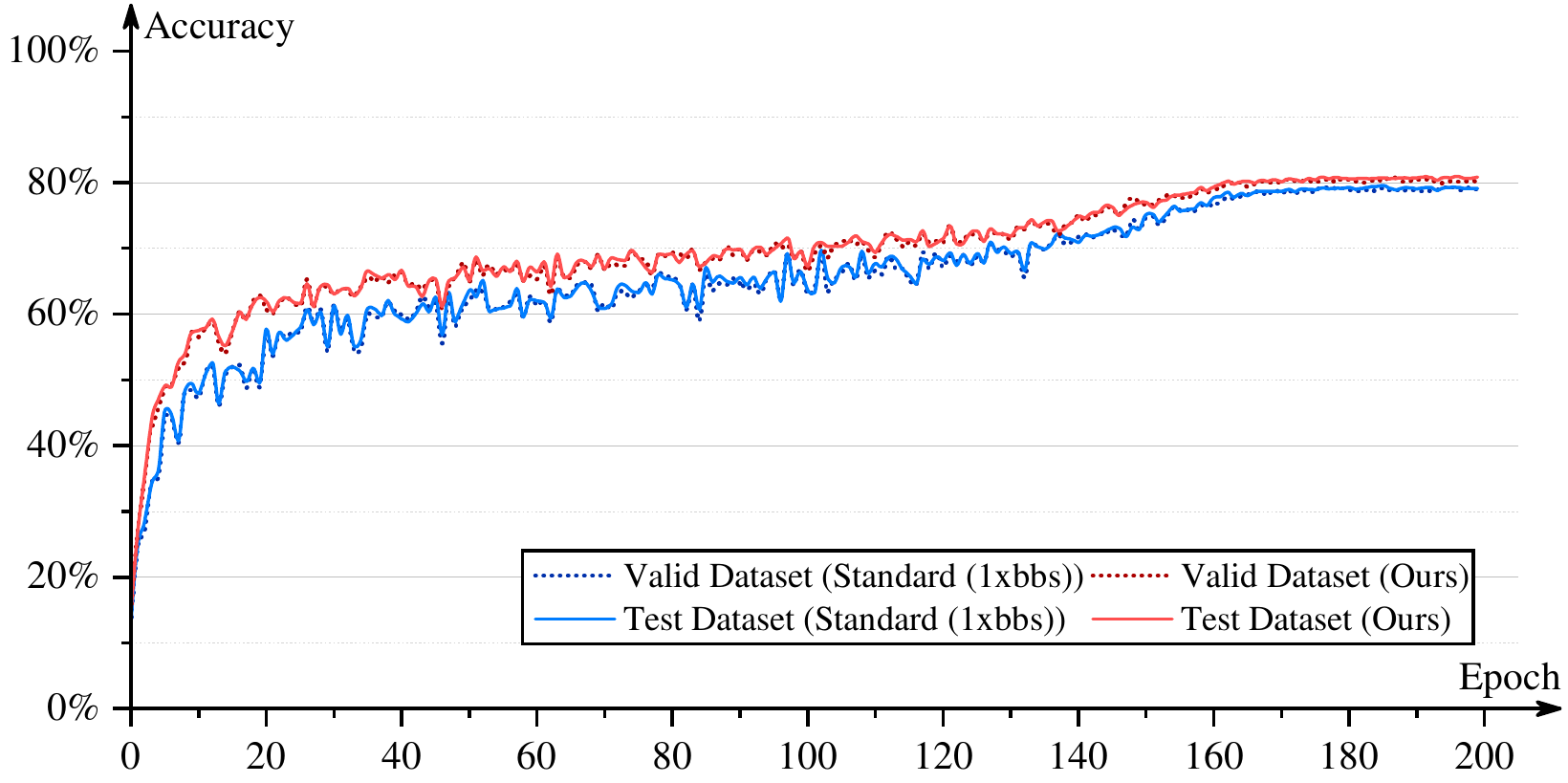}
    }

    \subfloat[ResNet-50 ImageNet top-1]
    {
        \includegraphics[width=0.42\linewidth]{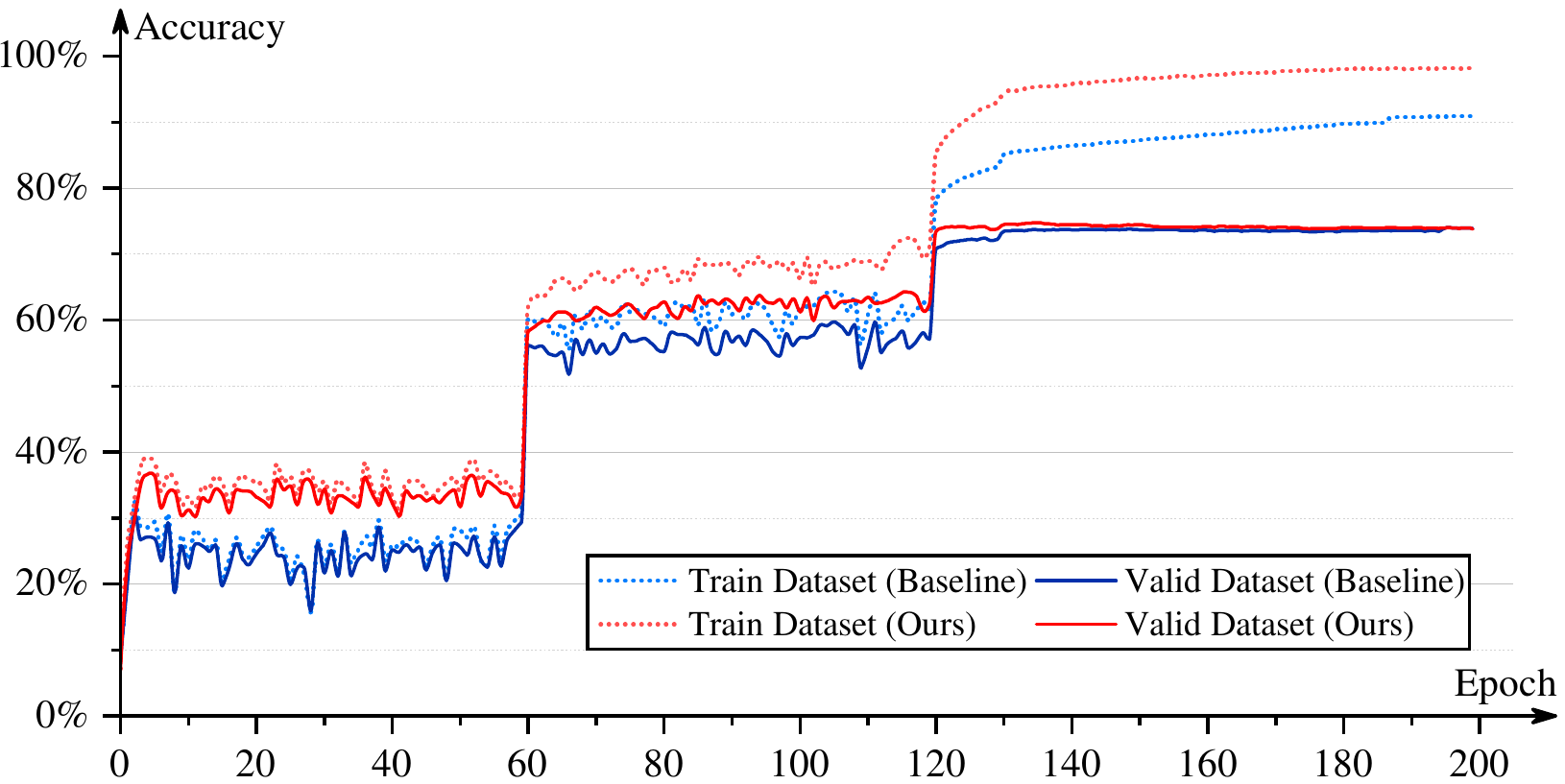}
    }
    \subfloat[ResNet-50 ImageNet top-5]
    {
        \includegraphics[width=0.42\linewidth]{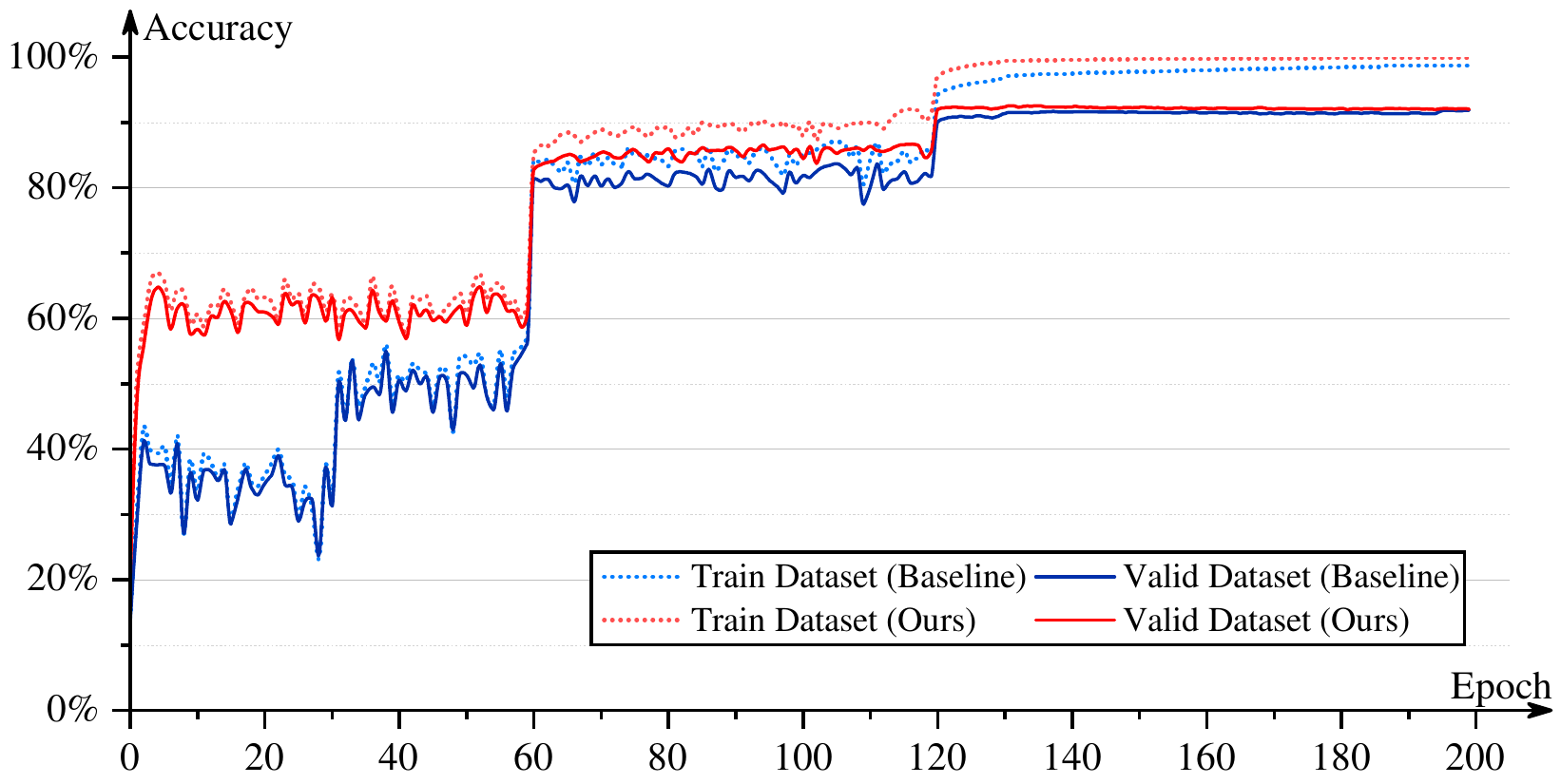}
    }

    \subfloat[ResNet-101 ImageNet top-1]
    {
        \includegraphics[width=0.42\linewidth]{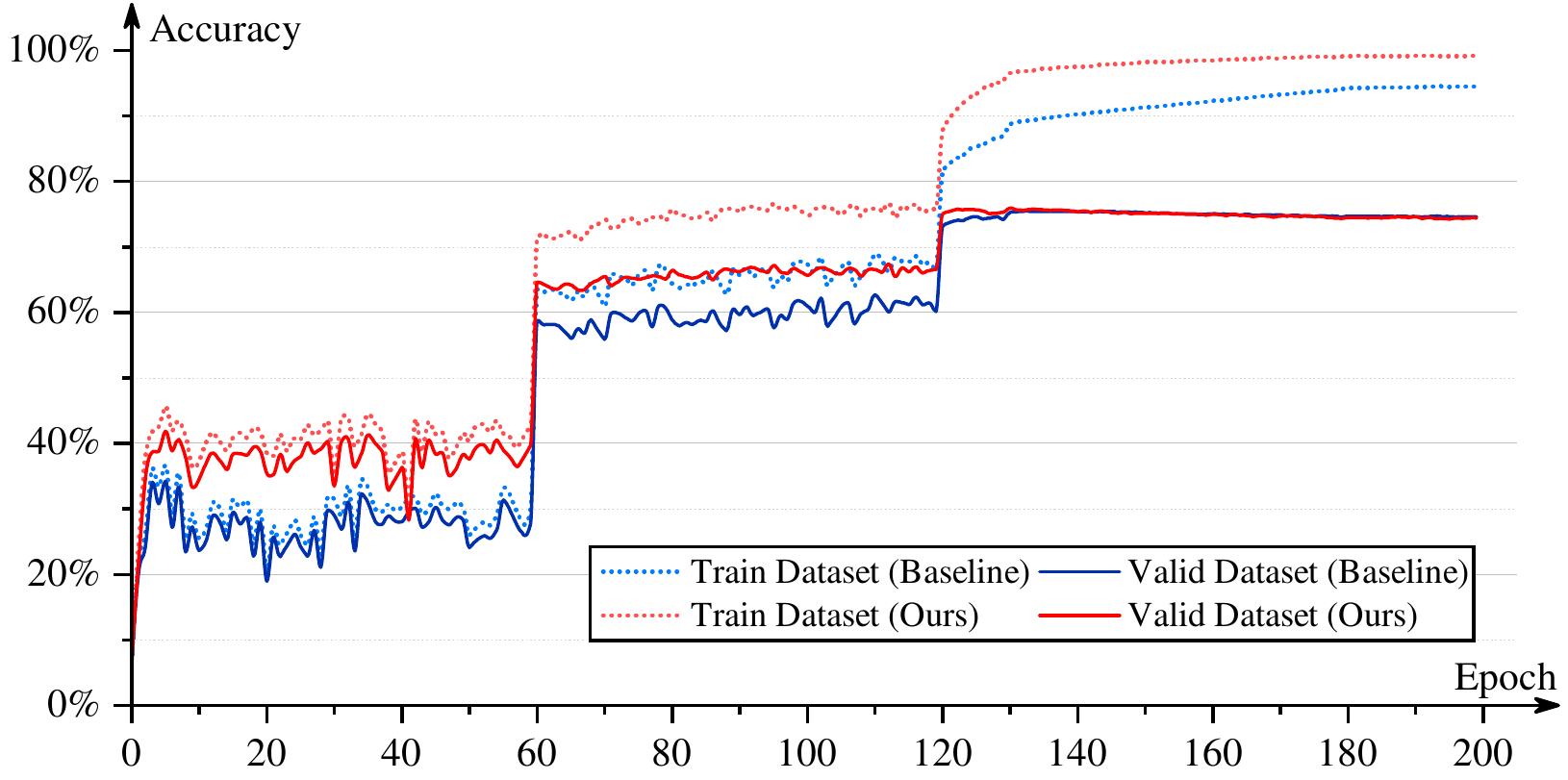}
    }
    \subfloat[ResNet-101 ImageNet top-5]
    {
        \includegraphics[width=0.42\linewidth]{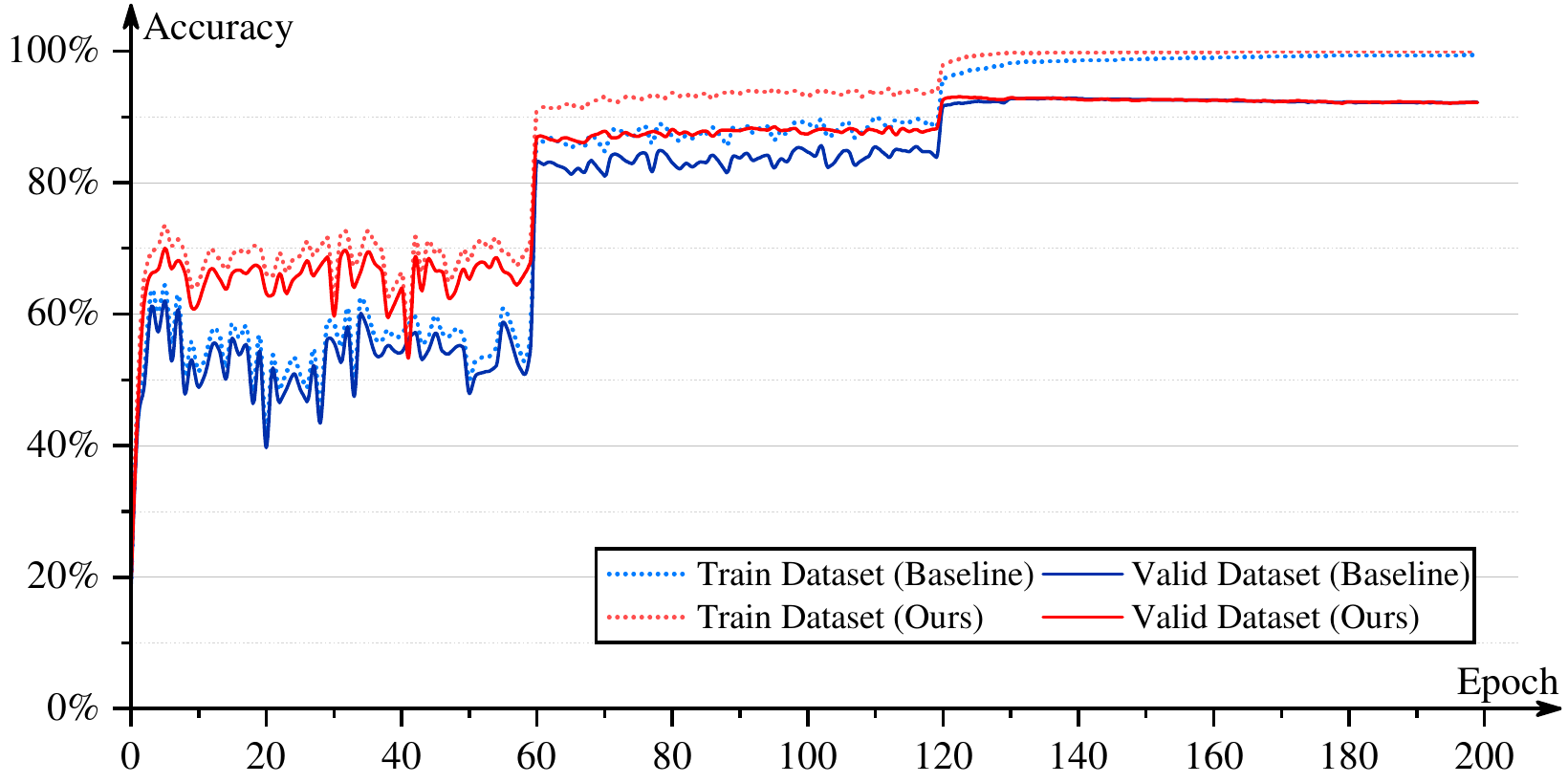}
    }
    \caption{Accuracy curves}
    \label{xunliantuxiang}
\end{figure}


\end{document}